\documentclass[10pt,twocolumn,letterpaper]{article}
\pdfoutput=1   %

\usepackage[pagenumbers]{cvpr}  %
\usepackage{cuted}       %
\usepackage{caption}
\usepackage{graphicx}
\usepackage{amsmath,amssymb}
\usepackage{booktabs}
\usepackage{multirow}
\usepackage{xcolor}
\usepackage{pifont}
\usepackage{xspace}
\usepackage{subcaption}

\usepackage{colortbl}    %

\newcommand{\cmark}{\ding{51}}
\newcommand{\xmark}{\ding{55}}
\newcommand{\gain}[1]{{\color{ForestGreen}\textbf{#1}}}   %

\newcommand{\ours}{LookZoom\xspace}          %
\newcommand{\look}{\texttt{<look>}\xspace}

\newcommand{\tbc}[1]{{\color{red}\textbf{TBC}}}

\newcommand{\B}{B}                            %
\newcommand{\gl}{g_{\text{look}}}            %
\newcommand{\gans}{g_{\text{ans}}}           %

\usepackage{amsthm}             %
\newtheorem{proposition}{Proposition}

\definecolor{cvprblue}{rgb}{0.21,0.49,0.74}
\usepackage[breaklinks,colorlinks,allcolors=cvprblue]{hyperref}   %
\hypersetup{pdftitle={The Answer Knows Where to Look: Zooming before MLLMs Answer under Visual-Token Budgets},
            pdfauthor={Zhensheng Hu, Guanglong Xu, Guanqiang Hu}}

\newcommand{\papertitle}{The Answer Knows Where to Look:\\Zooming before MLLMs Answer under Visual-Token Budgets}
\title{\papertitle}

\author{
Zhensheng Hu$^{1,2}$\qquad Guanglong Xu$^{3}$\qquad Guanqiang Hu$^{4,*}$\\[0.3em]
{\normalsize $^{1}$Integrated Hospital of Traditional Chinese Medicine, Southern Medical University, Guangzhou, China}\\
{\normalsize $^{2}$Institute of Medical Engineering Interdisciplinary Research, Xinjiang Medical University, Urumqi, China}\\
{\normalsize $^{3}$School of Mathematics and Statistics, Zhaoqing University, Guangdong 526061, China}\\
{\normalsize $^{4}$Guangdong Kamfu Technology Co., Ltd., Foshan, China}\\
{\tt\small huzhensheng@smu.edu.cn}\quad{\tt\small xugl2018@163.com}\quad{\tt\small guanqiang.hu.cn@gmail.com}\quad{\small $^{*}$Corresponding author}
}

\begin{document}
\maketitle
\begin{abstract}
Multimodal large language models (MLLMs) read images through visual tokens, and under the small per-image budgets that efficient deployment requires, the fine text of documents, charts and infographics becomes illegible.
Zooming in recovers the evidence, but the region must be chosen before the model answers.
We find that MLLMs know where the evidence is, but not in a form that can be read before they answer: the answer tokens localize it from the middle layers on, whereas the attention available before answering does so only in the top layers, at a depth that varies across models, and diffusely.
Supervising the pre-answer attention to point degrades reading, and the benefit of zooming is governed by each image's information gap.
Building on these findings, we introduce \ours, which lets an MLLM look where its answer would.
A dedicated look token, processed in a single prefill and kept out of the answering pass, learns in hindsight to attend as the model's own correct answers do, extracting this knowledge into a concentrated pointer before any answer exists.
Its hidden state then selects how to look, from no zoom to crops, square windows, windows that adapt to the pointer and foveated warps, supervised by the counterfactual outcome of every view.
\ours needs no box annotations, per-model layer search, external localizers or generation before the answer.
On Qwen3-VL at three scales and five text-rich benchmarks, it surpasses the strongest baseline by $4.8$--$9.1$ points at 98 visual tokens per view and attention-guided zooming by $11.1$--$17.0$ points, while running faster than attention-guided and grounding-based zooming.
\end{abstract}

\section{Introduction}
\label{sec:intro}
\begin{figure}[t]
  \centering
  \includegraphics[width=\linewidth]{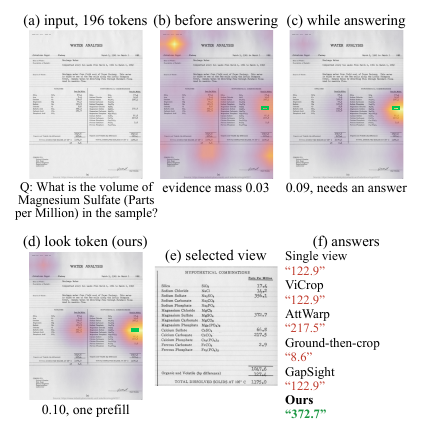}
  \caption{\textbf{The answer knows where to look} (Qwen3-VL-4B, 196 visual tokens, V-CoT-Doc). (b)~Before answering, the attention of the model's middle layers barely touches the evidence (green box). (c)~With the correct answer teacher-forced, the same layers attend to the evidence, but only once the answer is known. (d)~Our \look token concentrates on the evidence in a single prefill, and (e)~the view it selects makes the value legible. Numbers give the fraction of attention inside the evidence box.}
  \label{fig:teaser}
\end{figure}

Reading a receipt, a chart or an infographic requires seeing small text.
Multimodal large language models (MLLMs) see an image through visual tokens whose number grows with its resolution~\cite{bai2025qwen25vl,bai2025qwen3vl,liu2024llavanext}, and long documents, multi-image prompts, video and on-device inference all cap the number of tokens each image may use.
Under such a budget the image is downsampled, and the text that answers the question often becomes illegible.
With 98 visual tokens per image, Qwen3-VL-8B averages $48.2$ on five text-rich benchmarks, against $82.2$ with 392 tokens (Tab.~\ref{tab:main}).

A natural remedy is to \emph{look again}: re-encode the relevant region at a higher effective resolution, again within the per-view budget.
The region has to be chosen before the answer exists.
Training-free methods choose it from the model's own attention, since MLLMs know where to look~\cite{zhang2025vicrop}; the challenge is to read this knowledge at the moment the region must be chosen.
ViCrop~\cite{zhang2025vicrop} crops around the attention of the answer-start position, whereas AttWarp~\cite{dalal2026attwarp} first generates an answer and then warps the image along the attention of the generated tokens.
These two choices expose a dilemma.
The attention available before answering is cheap but, as we show, localizes the evidence only in the top layers, at a depth that differs across models, and diffusely, while the attention of the answer tokens localizes it from the middle layers on but exists only after an answer has been generated.
Other methods prompt or train the model to name the region with box coordinates~\cite{shao2024visualcot,zheng2026deepeyes,zhao2026lfpc,lin2026adaptvision}, which again requires decoding before the answer, and naming a region is hard when its text cannot be read.

We study how an MLLM should look again under a budget and find three regularities (Sec.~\ref{sec:observations}).
First, the model knows where the evidence is, but before answering this knowledge is hard to read: in the middle layers the answer tokens place $1.5$--$1.8{\times}$ more attention on the evidence than the answer-start position (Fig.~\ref{fig:teaser}), whose attention catches up only late and diffusely.
Second, pointing and reading compete for the same attention: supervising the pre-answer attention to point sharpens it but costs reading accuracy.
Third, the value of looking again is governed by each image's information gap, the ratio between its native and budgeted resolution, and so is the choice of the best view.

We therefore let the model \emph{look where its answer would}.
\ours appends a dedicated \look token to the prompt.
The token has its own position-specific parameters and is removed from the answering pass, so learning to point does not interfere with reading.
Trained to attend as the model's own correct answers do, it extracts their localization into a single prefill by \emph{hindsight distillation}, read from a fixed band of middle layers, without box annotations, layer search or external networks.
Its hidden state drives an action head that chooses, per sample, whether and how to look (no zoom, a crop, a square window, a window adapted to the spread of the pointer or a foveated warp), supervised by the counterfactual outcomes of every view.
The model then answers from the low-resolution view and the selected view, or from the selected view alone in a replace variant.

Across three model scales and five benchmarks, \ours achieves the best average accuracy at 98 and 196 tokens per view (Tab.~\ref{tab:main}).
At 98 tokens, it surpasses the strongest baselines, which prompt the model for the box of the evidence or learn where to crop, by $4.8$--$9.1$ points and attention-guided zooming by $11.1$--$17.0$ points.
Because nothing is generated before the answer, it is also faster than attention-guided and grounding-based zooming (Tab.~\ref{tab:cost}).
Its replace variant keeps a single view in the answering pass and is $5.1$ points more accurate than AttWarp at the same number of visual tokens.
The concentrated pointer pays off when the view adapts to it: a window sized and placed by the \look token is $9.9$ and $5.7$ points more accurate than the same window driven by the pre-answer attention of the same layers and of the best top layers (Tab.~\ref{tab:ablation}).
Our contributions are:
\begin{itemize}
  \item An analysis under visual-token budgets showing that the answer knows where to look but pre-answer attention localizes the evidence only late and diffusely, that pointing and reading compete for the same attention, and that the information gap governs the value of zooming.
  \item \ours, a dedicated look token that distills \emph{where} the model's own correct answers look into a concentrated pointer and learns \emph{how} to look from the outcome of every view, in a single prefill isolated from reading.
  \item Consistent average gains over attention-guided, grounding-based and learned zooming at 98 and 196 tokens per view on three scales and five benchmarks, and an analysis showing that illegible text is hard to name but can still be pointed at (Sec.~\ref{sec:analysis}).
\end{itemize}

\section{Related work}
\label{sec:related}

\noindent\textbf{Visual tokens and resolution.}
MLLMs encode images into visual tokens whose number grows with resolution~\cite{bai2025qwen25vl,bai2025qwen3vl,liu2024llavanext}, and text-rich understanding depends on it~\cite{zhang2025vicrop}.
Token pruning and merging~\cite{chen2024fastv,yang2025visionzip,zhang2025sparsevlm,shang2025prumerge} encode the full image and are complementary to our per-view caps; VisionThink~\cite{yang2025visionthink} and AdaptVision~\cite{lin2026adaptvision} learn with RL when to request the full image or a region from a low-resolution input.
We study where and how to look again within a fixed per-view budget.

\noindent\textbf{Looking again.}
Guided visual search~\cite{wu2024vstar} and chain-of-thought with evidence boxes~\cite{shao2024visualcot} showed that re-encoding a relevant region helps MLLMs perceive small details.
Many methods name the region in text: Zoom-Refine~\cite{yu2025zoomrefine} predicts a box with a preliminary answer, ZoomEye~\cite{shen2025zoomeye} runs a training-free tree search over sub-patches, and RL-trained agents emit box coordinates during multi-turn reasoning~\cite{zheng2026deepeyes,zhang2025chainoffocus,zhao2026lfpc}, where LFPC~\cite{zhao2026lfpc} builds training images with an information gap; all of them decode text before the zoomed view is read.
Training-free methods derive the region from attention: ViCrop~\cite{zhang2025vicrop} crops around the relative attention of the answer-start position, HiDe~\cite{liu2026hide} decouples the attention of individual question tokens, LookWise~\cite{shen2026lookwise} zooms only when the model is unsure, and AttWarp~\cite{dalal2026attwarp} warps the image along the attention of a generated answer, with a distilled single-pass predictor and a classifier that decides whether to warp.
GapSight~\cite{zhu2026gapsight} learns whether and where to crop from the loss gaps of candidate crops.

\noindent\textbf{Attention as a pointer.}
Attention maps of MLLMs localize visual evidence well~\cite{zhang2025vicrop,kang2025fewheads}.
SD-RPN~\cite{shi2026sdrpn} distills the attention of the model's own responses into an added region-proposal branch, Q-Zoom~\cite{shi2026qzoom} adds a gate that skips high-resolution processing when it is not needed, and region-level policy optimization~\cite{shi2026visionrl2} refines such a predictor with RL.
Other work shapes attention during training against annotated boxes~\cite{gong2026saliencyr1,ou2026sayo,hu2026ease,selvaraju2019hint,zhou2025guiaima} or a generated focus cue with a box-trained detector~\cite{wu2026confothinking}.
Self-Saliency~\cite{berger2026selfsaliency} rewards two heads for attending to the regions that an external grounding model finds for the objects named in the model's own reasoning; we do not steer the shared attention.
Our analysis locates this signal before and while answering, and shows that supervising the shared attention trades reading for pointing (Sec.~\ref{sec:observations}).

\noindent\textbf{Positioning.}
\ours differs from these methods in three ways.
(i)~Its pointer is a dedicated token inside the MLLM, computed in the same prefill as the question without generating an answer or coordinates; its own parameters isolate pointing from reading.
(ii)~It is distilled in hindsight~\cite{zhao2025hind} from the attention of the model's own correct answers and read from a fixed band of layers, without box annotations, external grounders or a per-model search for layers.
(iii)~It chooses among several view types from the same token, supervised by the counterfactual outcome of every view, instead of making a binary zoom decision~\cite{dalal2026attwarp,shi2026qzoom,zhu2026gapsight}.

\section{Preliminary analysis}
\label{sec:observations}
We first examine how an MLLM should look again when its visual tokens are limited, using Qwen3-VL~\cite{bai2025qwen3vl} without any training.
To measure where the model looks, we use V-CoT-Doc, a held-out document split of Visual CoT~\cite{shao2024visualcot} whose 300 questions come with evidence boxes, and define the \emph{evidence mass} as the fraction of the attention to the image that falls inside the evidence box.
Unless stated otherwise, attention maps are averaged over heads and over the band of middle layers from which \ours reads its pointer, and smoothed identically for all variants; a uniform map has an evidence mass of $0.040$ at 196 tokens.

\begin{figure}[t]
  \centering
  \includegraphics[width=\linewidth]{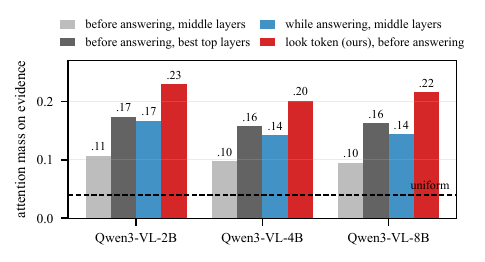}
  \caption{\textbf{Where the model looks before and while answering} (V-CoT-Doc, 196 tokens). In the middle layers, the attention before answering places little mass on the evidence, while the attention of the answer tokens already localizes it. Before answering, comparable localization appears only in the top layers (best band of four layers per model). Our \look token, read from the middle layers, concentrates the most mass on the evidence \emph{before} any answer is produced.}
  \label{fig:attn}
\end{figure}

\noindent\textbf{Observation 1: The model localizes the evidence, but before answering only late and diffusely.}
A zooming decision has to be made before the answer exists, \eg from the attention of the answer-start position~\cite{zhang2025vicrop}.
In the middle layers, this \emph{pre-answer attention} places only $0.107$ / $0.097$ / $0.095$ of its mass on the evidence (2B / 4B / 8B at 196 tokens, Fig.~\ref{fig:attn}), whereas the attention of the answer tokens places $1.5$--$1.8{\times}$ more ($0.167$ / $0.142$ / $0.144$); for correctly answered questions it is as accurate as the attention obtained by teacher-forcing the ground-truth answer ($0.187$ \vs $0.188$ on 2B).
The model therefore knows where the evidence is.
Before answering, however, this knowledge surfaces only in the top layers, at a depth that differs across scales (layers 17--20 of 28 for 2B, 21--24 and 28--31 of 36 for 4B and 8B; per-layer curves in the supplementary material), and even there most of the attention falls outside the evidence ($0.174$ / $0.158$ / $0.162$).
Existing methods either read the pre-answer attention at a fixed relative depth~\cite{zhang2025vicrop} or generate an answer to read the attention of its tokens~\cite{dalal2026attwarp}.
We instead extract the localization of the answer tokens into a token that is read from a fixed band of middle layers before any answer is generated (Sec.~\ref{sec:distill}).

\noindent\textbf{Observation 2: Pointing and reading compete for the same attention.}
The direct remedy is to supervise the pre-answer attention with the answer-time attention while fine-tuning the model with RL.
This sharpens pointing, raising the evidence mass of the 2B model to $0.217$ / $0.247$ at 196 / 392 tokens, but it costs reading: compared with the same RL fine-tuning without this loss, single-view accuracy drops in 8 of 11 benchmark--budget settings, by up to $2.7$ points (protocol in the supplementary material).
In Self-Saliency~\cite{berger2026selfsaliency}, rewarding two heads for grounded attention left their attention unchanged, and the model adapted its reasoning text instead.
The same attention cannot serve both as a sharp pointer and as an unbiased reader, so \ours gives pointing its own pathway (Sec.~\ref{sec:look}).

\begin{figure}[t]
  \centering
  \includegraphics[width=\linewidth]{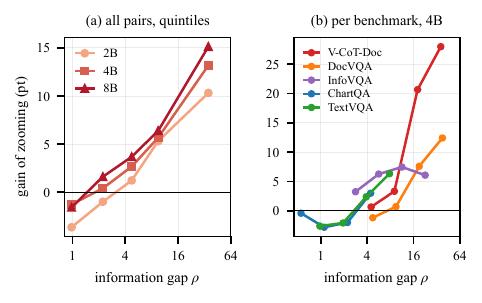}
  \caption{\textbf{The information gap governs the value of looking again.} Accuracy gain of a zoomed view (a $0.7$ crop placed by the pre-answer attention of the base model) over the single view, against the information gap $\rho$ (native over budget pixels). (a)~Quintiles of $\rho$ over 9{,}200 image--budget pairs per model (2{,}300 questions, 4 budgets). (b)~Per benchmark (4B): the same images at budgets from 784 to 98 tokens.}
  \label{fig:infogap}
\end{figure}

\noindent\textbf{Observation 3: The information gap governs the value of looking again.}
We compare a zoomed view, a $0.7$ crop placed by the pre-answer attention, with the single view on all evaluation questions at four budgets, and relate the gain to the information gap $\rho$, the ratio of native to budget pixels.
The gain rises monotonically with $\rho$ (Fig.~\ref{fig:infogap}a): zooming costs $1.3$--$3.6$ points when $\rho<1.6$ and brings $10.3$--$15.1$ points when $\rho>18$.
The trend holds within each benchmark when only the budget changes (Fig.~\ref{fig:infogap}b), so it does not stem from which datasets contain large images.
The best view depends on $\rho$ as well: when the crops, square windows and foveated warp of Sec.~\ref{sec:setup} are executed on 6{,}000 training images, the best fixed view shifts from gentle views (a foveated warp or a $0.7$ crop) for $\rho<2.2$ to strong magnification (a $0.5$ crop or square window) at larger gaps.
A simple bound makes the role of $\rho$ explicit: a zoomed view of relative area $A$ adds at most a factor $\min(\rho,1/A)$ of pixel density over the low-resolution view, and nothing when $\rho\le1$ (supplementary material).
Whether and how to look must therefore be decided per sample (Sec.~\ref{sec:head}).

\section{Method}
\label{sec:method}

\begin{figure*}[t]
  \centering
  \includegraphics[width=\textwidth]{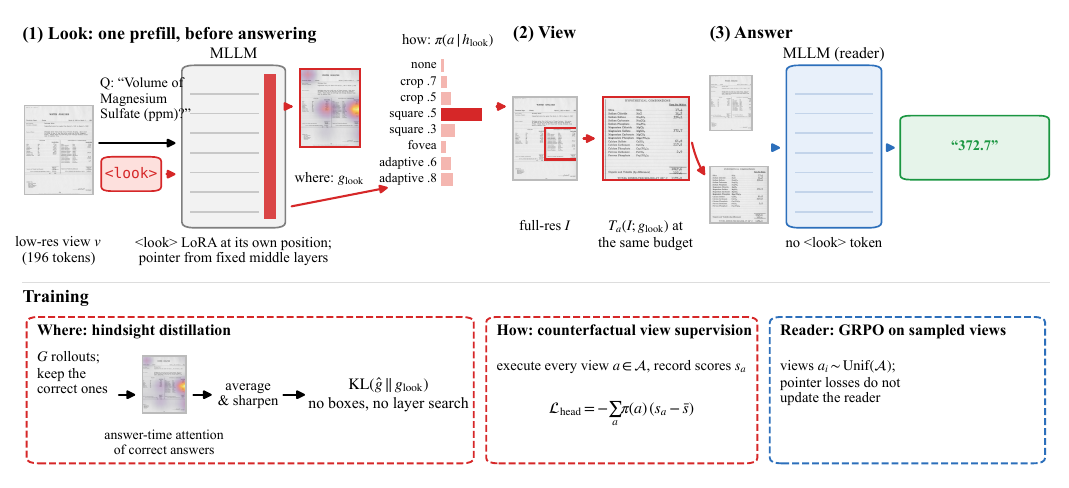}
  \caption{\textbf{Overview of \ours.} (1)~A dedicated \look token is appended to the prompt and processed in a single prefill on the low-resolution view $v$. Its attention to the image tokens tells \emph{where} to look, and its hidden state tells \emph{how} to look.
  (2)~The selected view transform is applied to the full-resolution image and re-encoded within the same per-view budget.
  (3)~The reader answers from the low-resolution view and the selected view; the \look token never enters this pass.
  Training: the \look attention is distilled from the answer-time attention of the model's own correct rollouts (hindsight distillation), the action head is supervised by the counterfactual outcomes of every view, and the reader is adapted with GRPO.}
  \label{fig:method}
\end{figure*}

\subsection{Problem setup}
\label{sec:setup}
Given an image $I$, a question $q$ and a visual-token budget $\B$ per view, the MLLM sees a low-resolution view $v=\mathrm{resize}(I,\B)$ that is encoded with at most $\B$ visual tokens.
A \emph{view transform} $T_a(I;g)$ maps the full-resolution image, guided by a pointer map $g\in\Delta^{h\times w}$ over the image-token grid of $v$, to a new image that is again encoded with at most $\B$ tokens.
We use a set $\mathcal{A}$ of eight views:
\texttt{none} (no second view);
two crops $\texttt{crop}_r$, $r\in\{.7,.5\}$, the window of relative size $r$ with the largest pointer mass;
two square windows $\texttt{square}_r$, $r\in\{.5,.3\}$, of side $r\cdot\min(W,H)$ and re-centered on the pointer mass inside the window, which strongly magnify tall infographics;
two adaptive windows $\texttt{adapt}_p$, $p\in\{.6,.8\}$, which span the central fraction $p$ of the pointer mass along each axis, so that a sharp pointer yields a small, strongly magnified window;
and a foveated warp \texttt{fovea}, a separable inverse-CDF resampling whose density mixes the pointer marginals with a uniform density, as in~\cite{dalal2026attwarp,mao2026magnifier}.
Following prior zooming work~\cite{zhang2025vicrop}, the answer is generated from $[\,v,\ T_a(I;g)\,]$, or from $v$ alone when $a=\texttt{none}$.
Two decisions must be made for every sample before the answer is known: \emph{where} to look ($g$) and \emph{how} to look ($a$).

\subsection{A dedicated look token}
\label{sec:look}
We append a learnable token \look to the prompt, after the generation prefix, and run a single prefill on $[v,q,\look]$.
Its input embedding is that of an unused special token plus a learnable offset, initialized to the embedding of the word ``look''.
A LoRA~\cite{hu2022lora} on every attention and MLP projection of the first $\ell_{\max}$ decoder layers acts only at the \look position.
The pointer is the attention of \look to the image tokens, averaged over the $K$ heads of a band of middle layers $\mathcal{L}$, normalized over the image and smoothed with a Gaussian kernel $G_\sigma$:
\begin{equation}
\gl = \mathcal{N}\Big(G_\sigma * \sum_{\ell\in\mathcal{L}}\sum_{k=1}^{K} a^{(\ell,k)}\Big),
\label{eq:pointer}
\end{equation}
where $a^{(\ell,k)}\in\mathbb{R}_{\ge 0}^{h\times w}$ holds the attention weights of head $k$ in layer $\ell$ from \look to the image tokens and $\mathcal{N}$ normalizes a map to unit sum.

\noindent\textbf{Isolating pointing from reading.}
The \look token is removed from the answering pass, so the reader never attends to it, and its LoRA acts at no other position.
After back-propagating the pointer loss, we also discard the gradients of the reader parameters.
Pointing can therefore be trained as strongly as needed without changing how the model reads (\cf Observation~2).

\subsection{Where to look: hindsight distillation}
\label{sec:distill}
Observation~1 shows that the answer tokens localize the evidence already in the middle layers, whereas before answering this localization appears only in the top layers and diffusely.
We extract it into the \look token by distillation.
For a training sample we draw $G$ rollouts $\{y_i\}$ (Sec.~\ref{sec:reader}) and keep the correct ones, $\mathcal{C}=\{i: r(y_i)\ge\tau\}$, where $r$ is the task metric and $\tau=0.5$.
For up to two distinct kept rollouts, we teacher-force $y_i$ on the low-resolution view and read the answer-time attention $\gans(y_i)$, the attention from the answer tokens to the image, aggregated over tokens, heads and $\mathcal{L}$ as in Eq.~\eqref{eq:pointer}.
The target is their sharpened average,
\begin{equation}
\hat g\ \propto\ \Big(\frac{1}{|\mathcal{C}'|}\sum_{i\in\mathcal{C}'}\gans(y_i)\Big)^{2},
\label{eq:target}
\end{equation}
where $\mathcal{C}'\subseteq\mathcal{C}$ are the selected rollouts, and the \look token minimizes $\mathcal{L}_{\text{look}}=\mathrm{KL}(\hat g\,\|\,\gl)$, whose minimizer is the expected target given the prompt (supplementary material).
Restricting the targets to correct answers ensures that the token learns from attention that produced correct answers.
Squaring concentrates the target on its peaks, which is essential for the views that adapt to the pointer (Tab.~\ref{tab:look_training}).
The signal needs no box annotation or external localizer, and at test time no generation, since \look is a prefill token.

\subsection{How to look: counterfactual view supervision}
\label{sec:head}
Observation~3 shows that whether and how strongly to zoom depends on each sample's information gap.
A small MLP head $\pi_\theta(a\mid\phi)$ predicts the view from $\phi$, the \look hidden states after layer $\ell_{\max}$ and after the last layer, together with eight scalar features of the budget, the image and $\gl$ (supplementary material); for each model we select on held-out images whether to use the hidden states, the scalars or both.
To supervise the head, we \emph{execute every view} on 6{,}000 further training images that are not used for RL: for each image, budget and $a\in\mathcal{A}$, we answer from $[v, T_a(I;\gl)]$ and record the task score $s_a$.
The head maximizes the expected centered score
\begin{equation}
\mathcal{L}_{\text{head}}=-\textstyle\sum_{a\in\mathcal{A}}\pi_\theta(a\mid\phi)\,\big(s_a-\bar s\big),
\end{equation}
where $\bar s$ is the mean score over $\mathcal{A}$.
This is a full-information counterpart of the policy gradient: every action is observed, so the head avoids the variance of sampling one action per image.
At test time we take $\arg\max_a \pi_\theta(a\mid\phi)$; choosing \texttt{none} also saves the tokens of the second view.

\subsection{Training the reader and inference}
\label{sec:reader}
The reader is the MLLM with a standard LoRA, trained with GRPO~\cite{shao2024deepseekmath}.
For each prompt we sample $G$ answers, each on its own view $[v, T_{a_i}(I;\gl)]$ with $a_i$ drawn uniformly from $\mathcal{A}$, so that the reader learns to use every view type; advantages are normalized within the group.
$\mathcal{L}_{\text{look}}$ is optimized jointly with the reader on the same rollouts, and the action head is fitted afterwards on the counterfactual outcomes of the final model.

\noindent\textbf{Inference.}
(i)~One prefill on $[v,q,\look]$ yields $\gl$ and the head input;
(ii)~the head selects $a$, and $T_a$ builds the view;
(iii)~one answering pass on $[v, T_a(I;\gl), q]$.
\ours thus needs two passes and no generation before the answer.
A \emph{replace} variant answers from $T_a(I;\gl)$ alone, with the same number of visual tokens in the answering pass as a single view and as AttWarp (details in the supplementary material).

\section{Experiments}
\label{sec:experiments}

\subsection{Setup}
\label{sec:exp_setup}
\begin{table*}[t]
\centering
\small
\setlength{\tabcolsep}{3.5pt}
\caption{\textbf{Main results under per-view visual-token budgets.} Accuracy (\%, official metrics) with at most $\B$ visual tokens per view; \emph{Views}: views in the answering pass. Per-benchmark results use $\B{=}196$; the last three columns average the five benchmarks at 196, 98 and 392 tokens. Baselines run on the base model, with the official settings where code is public; Ground-then-crop adds a crop of the evidence box named by the model, and GapSight$^\dagger$ is our re-implementation (no public code), trained on the same images as our action head. \textbf{Ours (replace)} answers from the selected view alone. Best and second best per model in \textbf{bold} and \underline{underlined}; $\Delta$: Ours minus the best baseline.}
\label{tab:main}
\begin{tabular}{@{}c@{\hspace{2pt}}l cc ccccc c cc@{}}
\toprule
& & & & \multicolumn{6}{c}{$\B=196$} & $\B=98$ & $\B=392$ \\ \cmidrule(lr){5-10} \cmidrule(lr){11-11} \cmidrule(lr){12-12}
& Method & Training-free & Views & V-CoT-Doc & DocVQA & InfoVQA & ChartQA & TextVQA & Avg. & Avg. & Avg. \\
\midrule
\multirow{8}{*}{\rotatebox[origin=c]{90}{Qwen3-VL-2B}} & Single view & -- & 1 & 52.0 & 66.4 & 37.6 & \underline{77.6} & 68.9 & 60.5 & 40.8 & 74.9 \\
 & AttWarp~\cite{dalal2026attwarp} & \cmark & 1 & 54.7 & 64.3 & 42.3 & 77.4 & 73.0 & 62.3 & 40.9 & 76.6 \\
 & \textbf{Ours (replace)} & \xmark & 1 & \underline{68.3} & 72.0 & \textbf{53.4} & \textbf{78.0} & \textbf{76.3} & \underline{69.6} & \underline{55.6} & \underline{77.6} \\
\cmidrule(l){2-12}
 & ViCrop~\cite{zhang2025vicrop} & \cmark & 2 & 53.7 & 65.8 & 41.9 & 75.8 & 70.8 & 61.6 & 43.1 & 73.3 \\
 & Ground-then-crop & \cmark & 2 & \underline{68.3} & 73.1 & 46.4 & 74.6 & 73.2 & 67.1 & 51.0 & 76.5 \\
 & GapSight$^\dagger$~\cite{zhu2026gapsight} & \xmark & 2 & 63.3 & \underline{74.6} & 44.4 & 76.2 & 68.8 & 65.5 & 49.9 & 75.4 \\
 & \textbf{Ours} & \xmark & 2 & \textbf{76.0} & \textbf{80.0} & \underline{53.1} & \underline{77.6} & \underline{76.2} & \textbf{72.6} & \textbf{60.1} & \textbf{79.3} \\
\rowcolor{gray!8} & $\Delta$ over the best baseline & & & \gain{+7.7} & \gain{+5.4} & \gain{+6.7} & \gain{+0.0} & \gain{+3.0} & \gain{+5.5} & \gain{+9.1} & \gain{+2.7} \\
\midrule
\multirow{8}{*}{\rotatebox[origin=c]{90}{Qwen3-VL-4B}} & Single view & -- & 1 & 56.0 & 71.4 & 45.1 & 81.8 & 72.9 & 65.4 & 45.6 & 79.3 \\
 & AttWarp~\cite{dalal2026attwarp} & \cmark & 1 & 60.3 & 69.8 & 49.9 & \underline{82.6} & 77.7 & 68.1 & 46.9 & \underline{81.8} \\
 & \textbf{Ours (replace)} & \xmark & 1 & 71.3 & 77.1 & \underline{57.3} & 80.8 & \textbf{79.5} & \underline{73.2} & \underline{60.2} & \textbf{82.1} \\
\cmidrule(l){2-12}
 & ViCrop~\cite{zhang2025vicrop} & \cmark & 2 & 66.0 & 71.4 & 50.5 & 80.2 & 75.6 & 68.7 & 52.5 & 80.1 \\
 & Ground-then-crop & \cmark & 2 & 66.7 & \underline{79.2} & 56.7 & 82.2 & 75.2 & 72.0 & 58.3 & 80.8 \\
 & GapSight$^\dagger$~\cite{zhu2026gapsight} & \xmark & 2 & \underline{73.0} & 77.7 & 55.3 & 80.0 & 73.1 & 71.8 & 57.3 & 79.1 \\
 & \textbf{Ours} & \xmark & 2 & \textbf{78.3} & \textbf{81.0} & \textbf{59.5} & \textbf{83.2} & \underline{77.8} & \textbf{76.0} & \textbf{63.6} & 81.6 \\
\rowcolor{gray!8} & $\Delta$ over the best baseline & & & \gain{+5.3} & \gain{+1.8} & \gain{+2.8} & \gain{+0.6} & \gain{+0.1} & \gain{+4.0} & \gain{+5.3} & {\color{gray}-0.2} \\
\midrule
\multirow{8}{*}{\rotatebox[origin=c]{90}{Qwen3-VL-8B}} & Single view & -- & 1 & 66.0 & 78.2 & 49.1 & 84.2 & 74.9 & 70.5 & 48.2 & 82.2 \\
 & AttWarp~\cite{dalal2026attwarp} & \cmark & 1 & 66.3 & 73.9 & 49.8 & \underline{85.4} & \underline{79.1} & 70.9 & 47.8 & \underline{84.6} \\
 & \textbf{Ours (replace)} & \xmark & 1 & 76.3 & 80.2 & 62.0 & \textbf{86.0} & 77.5 & \underline{76.4} & \underline{62.7} & 83.9 \\
\cmidrule(l){2-12}
 & ViCrop~\cite{zhang2025vicrop} & \cmark & 2 & 69.0 & 76.6 & 52.6 & 84.0 & 77.1 & 71.9 & 54.0 & 82.4 \\
 & Ground-then-crop & \cmark & 2 & 74.0 & \underline{81.4} & \underline{64.0} & 83.8 & 78.8 & \underline{76.4} & 62.1 & 83.8 \\
 & GapSight$^\dagger$~\cite{zhu2026gapsight} & \xmark & 2 & \underline{77.7} & 81.3 & 57.8 & 83.2 & 76.1 & 75.2 & 62.6 & 82.8 \\
 & \textbf{Ours} & \xmark & 2 & \textbf{86.3} & \textbf{84.9} & \textbf{68.5} & \textbf{86.0} & \textbf{79.3} & \textbf{81.0} & \textbf{67.4} & \textbf{85.1} \\
\rowcolor{gray!8} & $\Delta$ over the best baseline & & & \gain{+8.6} & \gain{+3.5} & \gain{+4.5} & \gain{+0.6} & \gain{+0.2} & \gain{+4.6} & \gain{+4.8} & \gain{+0.5} \\
\bottomrule
\end{tabular}
\end{table*}

\noindent\textbf{Models and budgets.}
We build on Qwen3-VL-Instruct~\cite{bai2025qwen3vl} at three scales (2B, 4B, 8B).
A budget of $\B$ visual tokens per view corresponds to $\B\times32^2$ pixels (patch size 16, $2{\times}2$ token merging).
Every view, low-resolution or zoomed, is resized to at most this many pixels, preserving its aspect ratio.
We evaluate $\B\in\{98,196,392\}$, where text-rich images lose most of their legible detail, and report $\B=784$ in the supplementary material.

\noindent\textbf{Benchmarks and metrics.}
DocVQA~\cite{mathew2021docvqa} (val, ANLS), InfographicVQA~\cite{mathew2022infographicvqa} (val, ANLS), ChartQA~\cite{masry2022chartqa} (test, relaxed accuracy), TextVQA~\cite{singh2019textvqa} (val, VQA accuracy), and V-CoT-Doc (Sec.~\ref{sec:observations}; strict match).
For each public benchmark we evaluate a fixed random subset of 500 questions (seed 0).
TextVQA is not used for training and is out of domain.
The supplementary material repeats the 4B comparison at 196 tokens on the complete evaluation sets.

\noindent\textbf{Baselines.}
We compare with the strongest training-free methods that point with the model's own attention, ViCrop~\cite{zhang2025vicrop} and AttWarp~\cite{dalal2026attwarp}, re-implemented from the official code with the official default settings on the same base model and budget (layer indices rescaled to model depth).
In the setting reported by AttWarp, our re-implementations of both methods reproduce the reported gains on DocVQA and also improve TextVQA, by smaller margins (supplementary material).
\emph{Ground-then-crop} prompts the model for the bounding box of the evidence in Qwen3-VL's native grounding format, enlarges the box by its own size on each side, and answers from the low-resolution view and the crop, in the spirit of Visual CoT~\cite{shao2024visualcot} and Zoom-Refine~\cite{yu2025zoomrefine}.
We re-implement GapSight~\cite{zhu2026gapsight}, which learns from the loss gaps of candidate crops whether and where to crop, and train its router on the same images as our action head (no code is public; supplementary material).
We also report the single low-resolution view.
The supplementary material also evaluates DeepEyes~\cite{zheng2026deepeyes}, an RL-trained zooming agent built on a different base model, under the same budgets.

\noindent\textbf{Implementation.}
The reader and the \look token are trained jointly with GRPO for 200 steps (16 prompts, $G{=}8$ rollouts, budgets sampled from $\{98,196,392\}$, seed 0) on 4{,}500 questions from the DocVQA split of Visual CoT, ChartQA and InfographicVQA, without evidence boxes.
The pointer is read from layers $13$--$16$ (2B) or $17$--$20$ (4B and 8B alike).
Training takes 1.7 / 2.9 / 3.2 hours for 2B / 4B / 8B on RTX 4090 GPUs; all hyperparameters are listed in the supplementary material.

\subsection{Main results}
\label{sec:main_results}
\begin{figure*}[t]
  \centering
  \includegraphics[width=\textwidth]{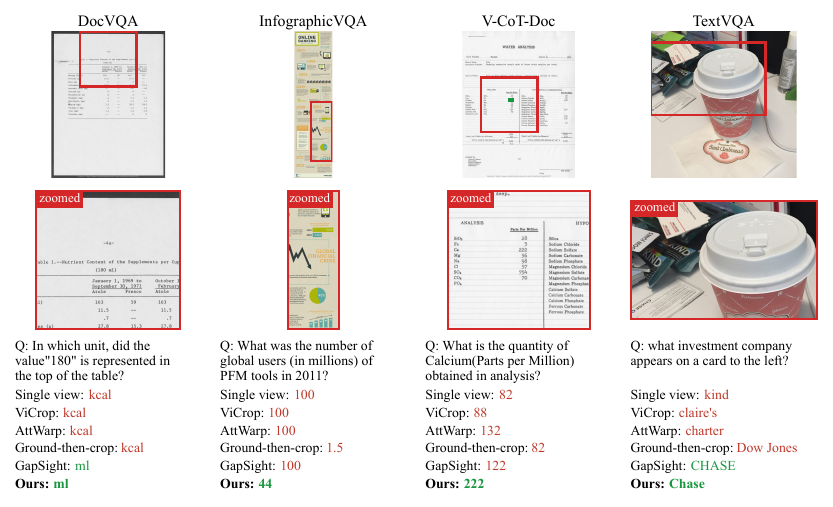}
  \caption{\textbf{Qualitative results} (Qwen3-VL-4B, 196 tokens). Top: input with the window selected by \ours (red). Middle: the zoomed view as the model reads it, re-encoded within the budget. Bottom: answers (green: correct, red: wrong). The selected views make units, numbers and brand names legible that the single view misreads.}
  \label{fig:qual}
\end{figure*}
\ours achieves the best average accuracy at 98 and 196 tokens at every model scale (Tab.~\ref{tab:main}; examples in Fig.~\ref{fig:qual}), and in every column of Tab.~\ref{tab:main} the best result comes from one of its two variants.
At 98 tokens per view, it surpasses the strongest baseline, Ground-then-crop or GapSight, by $+9.1$ / $+5.3$ / $+4.8$ points (2B / 4B / 8B) and ViCrop and AttWarp by $+17.0$ / $+11.1$ / $+13.4$ points; at 196 tokens it leads by $+5.5$ / $+4.0$ / $+4.6$ points and matches or exceeds the strongest baseline on every benchmark.
The gains are largest where the information gap is largest, on the small text of V-CoT-Doc ($+7.7$ / $+5.3$ / $+8.6$ at 196 tokens); at 392 tokens, where the gap is small, \ours remains on par with the strongest baseline ($-0.2$ to $+2.7$).
With a single view in the answering pass, the replace variant beats AttWarp by $5.1$--$7.3$ points at 196 tokens and matches or exceeds the two-view Ground-then-crop and GapSight.
DeepEyes-7B~\cite{zheng2026deepeyes} trails its base model, Qwen2.5-VL-7B, by $9.8$--$15.9$ points under the same budgets (supplementary material).

\begin{table}[t]
\centering
\small
\setlength{\tabcolsep}{2.6pt}
\caption{\textbf{Cost} on Qwen3-VL-4B at $\B{=}196$ in the accounting of AttWarp~\cite{dalal2026attwarp}: MLLM passes, generation before the final answer, visual tokens processed and latency per sample (batch size 1, one RTX 4090). Acc.: five-benchmark average. $^\dagger$Our re-implementation.}
\label{tab:cost}
\begin{tabular}{l ccccc}
\toprule
Method & Passes & Gen.\ first & Tokens & Lat.\ (s) & Acc. \\
\midrule
Single view & 1 & \xmark & 194 & 0.25 & 65.4 \\
ViCrop~\cite{zhang2025vicrop} & 3 & \xmark & 647 & 0.38 & 68.7 \\
AttWarp~\cite{dalal2026attwarp} & 2 & \cmark & 388 & 0.60 & 68.1 \\
Ground-then-crop & 2 & \cmark & 579 & 1.26 & 72.0 \\
GapSight$^\dagger$~\cite{zhu2026gapsight} & 2 & \xmark & 561 & 0.33 & 71.8 \\
\textbf{Ours (replace)} & 2 & \xmark & 388 & 0.34 & 73.2 \\
\textbf{Ours} & 2 & \xmark & 565 & 0.35 & 76.0 \\
\bottomrule
\end{tabular}
\end{table}

\noindent\textbf{Cost.}
In the accounting of AttWarp~\cite{dalal2026attwarp} (Tab.~\ref{tab:cost}), \ours and GapSight need two passes and generate nothing before the final answer, whereas AttWarp and Ground-then-crop first generate an answer or a box and ViCrop needs three passes.
\ours is therefore faster than attention-guided and grounding-based zooming ($0.35$\,s per sample \vs $0.38$--$1.26$\,s) and within $0.02$\,s of GapSight while $4.2$ points more accurate, and its replace variant, at the token cost of AttWarp, is $1.7{\times}$ faster than AttWarp.

\subsection{Ablations}
\label{sec:ablation}
\begin{table}[t]
\centering
\small
\setlength{\tabcolsep}{4.0pt}
\caption{\textbf{Ablations} on Qwen3-VL-4B (five-benchmark average accuracy at 98 / 196 / 392 tokens); each block changes one component. Pre-answer: attention of the answer-start position at the layers of the \look token or at the best top layers ($^\dagger$21--24, located with the evidence boxes of V-CoT-Doc).}
\label{tab:ablation}
\begin{tabular}{l ccc}
\toprule
Setting & 98 & 196 & 392 \\
\midrule
\multicolumn{4}{l}{\emph{(a) Where to look: pointer driving the same view}} \\
Adapt.\ window, pre-answer, same layers & 52.6 & 72.3 & 80.7 \\
Adapt.\ window, pre-answer, best layers$^\dagger$ & 56.8 & 73.9 & 81.9 \\
Adapt.\ window, \look token & 62.5 & 75.5 & 82.2 \\
Fovea, pre-answer, same layers & 48.9 & 70.4 & 80.4 \\
Fovea, pre-answer, best layers$^\dagger$ & 51.5 & 72.6 & 81.5 \\
Fovea, \look token & 54.2 & 73.5 & 81.3 \\
Square $.5$, pre-answer, same layers & 62.4 & 75.4 & 81.2 \\
Square $.5$, \look token & 62.9 & 75.1 & 81.1 \\
Crop $.7$, pre-answer, same layers & 58.2 & 74.2 & 81.5 \\
Crop $.7$, \look token & 57.8 & 74.6 & 81.3 \\
\midrule
\multicolumn{4}{l}{\emph{(b) How to look: view selection}} \\
Best fixed view per budget & 63.1 & 75.5 & 82.2 \\
Action head without adaptive windows & 62.9 & 75.0 & 82.3 \\
Per-sample action head (\textbf{ours}) & 63.6 & 76.0 & 81.6 \\
\midrule
\multicolumn{4}{l}{\emph{(c) Reader: base \vs RL-adapted reader}} \\
Base reader, single view & 45.6 & 65.4 & 79.3 \\
RL reader, single view & 45.6 & 66.0 & 79.3 \\
Base reader, crop $.7$ (decision) & 56.7 & 72.7 & 80.4 \\
RL reader, crop $.7$ (decision) & 58.2 & 74.2 & 81.5 \\
\bottomrule
\end{tabular}
\end{table}

Tab.~\ref{tab:ablation} isolates each component on Qwen3-VL-4B with the reader, views and protocol fixed.

\noindent\textbf{Where to look.}
The value of the \look token grows with how closely the view follows the pointer (Tab.~\ref{tab:ablation}a).
Windows of fixed size only need to contain the evidence, and both pointers place them equally well.
When the window adapts its size to the spread of the pointer, the \look token is far better: $+9.9$ / $+3.2$ / $+1.5$ points at 98 / 196 / 392 tokens ($+10.7$ / $+9.3$ at 98 tokens on 2B / 8B) over the pre-answer attention of the same layers, and $+5.7$ / $+1.6$ / $+0.3$ over that of the best top layers, which must be located per model with evidence boxes.
A concentrated pointer yields a small, strongly magnified window that still contains the evidence, and the foveated warp gains likewise ($+5.3$ / $+3.1$ / $+0.9$).
Pointing matters for every view: at 98 tokens a randomly placed window gains only $+3.5$ points over the single view, against $+16.8$ / $+17.3$ when placed by the pre-answer attention / \look token.

\noindent\textbf{How to look.}
The action head beats the best fixed view at 98 and 196 tokens ($+0.5$ each) with fewer tokens, is within $0.6$ points at 392 tokens, and gains $+0.7$ / $+1.0$ points at 98 / 196 tokens from adaptive windows: the concentrated pointer pays off once the head can pick views that exploit it.

\noindent\textbf{Reader.}
On the same views, the RL-adapted reader gains $1.1$--$1.5$ points over the base reader, while RL changes single-view accuracy by at most $0.6$ points (Tab.~\ref{tab:ablation}c): the gains come mostly from \emph{where} and \emph{how} \ours looks.

\begin{table}[t]
\centering
\small
\setlength{\tabcolsep}{3.6pt}
\caption{\textbf{Training the pointer} (Qwen3-VL-2B). Evidence mass of the pointer on V-CoT-Doc at 196 / 392 tokens, and five-benchmark accuracy averaged over 98 / 196 / 392 tokens for single-view reading and for the adaptive window placed by the pointer. Each row changes one element of training; the shared-attention variant trains the pre-answer attention instead of a \look token.}
\label{tab:look_training}
\begin{tabular}{l cc cc}
\toprule
& \multicolumn{2}{c}{Evidence mass} & \multicolumn{2}{c}{Accuracy} \\ \cmidrule(lr){2-3} \cmidrule(lr){4-5}
Pointer training & 196 & 392 & Single & Adaptive \\
\midrule
\look token (\textbf{ours}) & \textbf{.229} & \textbf{.266} & 59.0 & \textbf{70.6} \\
\quad w/o sharpening & .152 & .164 & 58.7 & 66.8 \\
\quad w/o outcome gate & .225 & .262 & \textbf{59.1} & 70.2 \\
\midrule
Shared attention & .219 & .251 & 58.4 & 69.5 \\
\bottomrule
\end{tabular}
\end{table}

\noindent\textbf{Training the pointer.}
In Tab.~\ref{tab:look_training}, sharpening the target is essential: without it the evidence mass drops from $0.229$ to $0.152$ and the adaptive window loses $3.8$ points on average.
Restricting the targets to correct answers matters little, since the answer-time attention is on average as accurate as that of the ground truth (Observation~1).
Training the shared attention instead of a dedicated token reaches a similar evidence mass but lowers single-view reading and every zoomed view, by up to $2.0$ points: the dedicated pathway lets pointing be trained without harming reading.

\subsection{Analysis}
\label{sec:analysis}
\noindent\textbf{What the \look token extracts.}
On V-CoT-Doc (Fig.~\ref{fig:attn}), the \look token places $0.229$ / $0.201$ / $0.216$ of its attention on the evidence (2B / 4B / 8B, 196 tokens), twice the pre-answer attention of the same layers and $1.3{\times}$ that of the best top layers, before any answer exists.
It places its peak inside the evidence box as often as the best layers do (pointing game $0.60$ / $0.63$ / $0.64$ \vs $0.60$ / $0.61$ / $0.59$) but concentrates more mass there, inherited from its sharpened target: it extracts the model's own localization and sharpens it for views that adapt to the pointer.

\noindent\textbf{Pointing \vs naming the region.}
At 98 tokens, the region selected by \ours contains $68$--$81\%$ of the evidence area on V-CoT-Doc, the box named by Ground-then-crop $33$--$39\%$, and the first crop of DeepEyes-7B $15\%$ (supplementary material).
The gap narrows as the budget grows ($74$--$79\%$ \vs $60$--$67\%$ for Ground-then-crop at 392 tokens), and with our reader held fixed, replacing the \look pointer and action head by the named box lowers accuracy by $3.6$--$4.7$ points at 98 tokens: text that cannot be read is hard to name, but its location can still be pointed at.

\section{Conclusion}
\label{sec:conclusion}
An MLLM knows where the evidence is, but before it answers this knowledge lies late in the network and is diffuse.
\ours extracts it into a dedicated look token that points and decides how to look in one prefill without interfering with reading, outperforming attention-guided, grounding-based and learned zooming.
Multi-step looking, multi-image inputs and video are natural next steps.

{
    \small
    \bibliographystyle{ieeenat_fullname}
    \bibliography{refs}
}

\twocolumn[{\centering{\Large\bf \papertitle\par}\vspace{0.4em}{\large Supplementary Material\par}\vspace{1.5em}}]
\setcounter{table}{0}\setcounter{figure}{0}\setcounter{equation}{0}
\renewcommand{\thetable}{S\arabic{table}}\renewcommand{\theHtable}{S\arabic{table}}
\renewcommand{\thefigure}{S\arabic{figure}}\renewcommand{\theHfigure}{S\arabic{figure}}
\renewcommand{\theequation}{S\arabic{equation}}\renewcommand{\theHequation}{S\arabic{equation}}
\renewcommand{\theproposition}{S\arabic{proposition}}
\renewcommand{\topfraction}{0.9}\renewcommand{\bottomfraction}{0.5}\renewcommand{\textfraction}{0.1}
\renewcommand{\floatpagefraction}{0.75}\renewcommand{\dbltopfraction}{0.9}\renewcommand{\dblfloatpagefraction}{0.75}
\setcounter{topnumber}{3}\setcounter{dbltopnumber}{3}\setcounter{totalnumber}{5}
\appendix

\section{Implementation details}
\label{sec:supp_impl}
\noindent\textbf{Prompts.}
Every question is followed by ``Answer the question using a single word or phrase.''
When a zoomed view is added (\ours, Ground-then-crop and GapSight), the prompt is prefixed with ``The second image is a magnified view of the relevant region of the first image.''
ViCrop and AttWarp use their official input formats: the original image and the crop without extra text (ViCrop), and the warped image alone (AttWarp).
Ground-then-crop first asks ``Locate the region of the image that contains the information needed to answer the question: ``$\langle q\rangle$''. Output its bounding box in JSON format as \{"bbox\_2d": [x1, y1, x2, y2]\}.'' and parses the box in Qwen3-VL's 0--1000 relative coordinates; when no valid box is returned, it answers from the single view.

\noindent\textbf{Views.}
$\texttt{crop}_r$ is the window of $r\,h\times r\,w$ cells of the token grid with the largest pointer mass, mapped to pixels and re-encoded within the budget.
$\texttt{square}_r$ is a square window of side $r\min(W,H)$ pixels: we take the window of largest pointer mass and re-center it on the pointer centroid inside the window (one mean-shift step), clipped to the image.
$\texttt{fovea}$ is a separable inverse-CDF resampling with density $0.5\cdot$marginal$\,+\,0.5\cdot$uniform, producing an image of the size of the budget view.
The adaptive window of Tab.~\ref{tab:ablation}a spans the central $80\%$ of the pointer mass along each axis (marginal quantiles $0.1$ and $0.9$), with a minimum side of $12\%$ of the image side.

\noindent\textbf{Replace variant.}
The replace variant uses a reader trained with GRPO on the zoomed view alone and its own action head over $\{\texttt{none},\texttt{crop}_{.85},\texttt{crop}_{.7},\texttt{crop}_{.5},\texttt{fovea}\}$; its \look token is trained with hindsight distillation, the sharpened target and the reward of the views it selects (Tab.~\ref{tab:look_training_dev}).
At test time it answers from $T_a(I;\gl)$ alone, or from $v$ when $a=\texttt{none}$.

\noindent\textbf{Hyperparameters.}
Tab.~\ref{tab:hparams} lists all hyperparameters; the seed is 0 for every run.

\begin{table*}[t]
\centering
\footnotesize
\setlength{\tabcolsep}{4pt}
\caption{\textbf{Hyperparameters} (identical for the three model scales unless stated).}
\label{tab:hparams}
\begin{tabular}{ll}
\toprule
Training questions & 1{,}500 each from Visual CoT (DocVQA split), ChartQA, InfographicVQA \\
GRPO steps / prompts per step / rollouts $G$ & 200 / 16 / 8 \\
Rollout sampling & temperature 1.0, top-$p$ 0.95, 48 tokens \\
Training budgets & uniform over $\{98, 196, 392\}$ tokens \\
Reader LoRA & rank 16, $\alpha{=}32$, dropout 0.05, all linear layers \\
Reader learning rate & $10^{-5}$ (AdamW), gradient clip 1.0 \\
Seed & 0 \\
\bottomrule
\end{tabular}
\hspace{1.5em}
\begin{tabular}{ll}
\toprule
\look LoRA & rank 16, $\alpha{=}32$, layers $0$--$\ell_{\max}$, \look position only \\
$\ell_{\max}$ / pointer layers $\mathcal{L}$ & 16 / 13--16 (2B); 20 / 17--20 (4B, 8B) \\
\look learning rate / smoothing $G_\sigma$ & $10^{-4}$ / $\sigma=1$ cell \\
Outcome gate $\tau$ / targets per sample & 0.5 / up to 2 distinct correct rollouts \\
Target sharpening / weight of $\mathcal{L}_{\text{look}}$ & exponent 2 / 1.0 \\
Action head & MLP (256 hidden), AdamW $10^{-3}$, 150 epochs; input and weight decay \\
 & selected on 10\% held-out images \\
Counterfactual set & 6{,}000 images $\times$ 3 budgets $\times$ 8 views \\
\bottomrule
\end{tabular}
\end{table*}

\noindent\textbf{Compute.}
Training on RTX 4090 GPUs takes 1.7\,h (2B), 2.9\,h (4B) and 3.2\,h (8B).
Collecting the counterfactual outcomes takes one answering pass per image, budget and view, \ie 144{,}000 passes per model, batched on the same GPUs.

\noindent\textbf{Protocol of Observation~2.}
The shared-attention variant is trained exactly like the \look token of the replace variant, except that the distillation loss is applied to the pre-answer attention of the last prompt position and its gradients are allowed to reach the reader.
Reading accuracy is the single-view accuracy on V-CoT-Doc, DocVQA, InfographicVQA, ChartQA and TextVQA at the budgets where all variants were evaluated (11 benchmark--budget settings), compared with the same RL fine-tuning without pointer losses.

\section{Baselines}
\label{sec:supp_baselines}
We re-implement ViCrop and AttWarp from the official repositories (ViCrop: \texttt{mllms\_know}; AttWarp: official release), keeping all default hyperparameters.
ViCrop uses the relative attention at layer 22 of 36 (the official Qwen2.5-VL setting, rescaled to depth for the 28-layer model), square windows of $224\times\{1,1.2,\dots,2\}$ pixels selected by the official neighbor-difference criterion, and answers from the original image and the raw crop.
AttWarp uses the attention of all generated tokens at layer 16 of 28 (rescaled to depth), the official min--max normalization, z-score$\times$10 sigmoid and $3{\times}3$ smoothing, and the official inverse-CDF warp, and it answers from the warped image, whose size we set to that of the budget view.
Tab.~\ref{tab:repro} checks the baselines with public code in settings reported by their authors: ViCrop and AttWarp in that of AttWarp, and DeepEyes-7B on V*, where we obtain $88.5$ against the reported $90.1$.
Ground-then-crop is our own construction and GapSight has no public code, so neither has reported numbers that we could reproduce; for GapSight we report the diagnostics of our re-implementation instead (Sec.~\ref{sec:supp_gapsight}).
\begin{table}[t]
\centering
\footnotesize
\setlength{\tabcolsep}{2.4pt}
\caption{\textbf{Reproduction of the baselines with public code}, each in a setting reported by its authors. Top: ViCrop and AttWarp, re-implemented line by line from the official code, in the setting of AttWarp~\cite{dalal2026attwarp} (Qwen2.5-VL-7B, $512^2$ images). They reproduce the reported improvements over the base model on DocVQA and also improve TextVQA, by smaller margins; absolute numbers differ because we evaluate a 1{,}000-question subset with our prompt. Bottom: DeepEyes-7B~\cite{zheng2026deepeyes} with the official checkpoint, prompt and tool loop on V*~\cite{wu2024vstar} at native resolution; the 25 of 191 answers that the official rules leave undecided are judged by Qwen2.5-VL-7B instead of the official Qwen2.5-72B.}
\label{tab:repro}
\begin{tabular}{l l ccc}
\toprule
Benchmark & Source & Base & ViCrop & AttWarp \\
\midrule
\multirow{2}{*}{\shortstack[l]{DocVQA\\(EM)}} & reported & 77.3 & 82.5 \scriptsize{(+5.2)} & 84.1 \scriptsize{(+6.8)} \\
 & ours & 72.4 & 79.0 \scriptsize{(+6.6)} & 81.5 \scriptsize{(+9.1)} \\
\midrule
\multirow{2}{*}{\shortstack[l]{TextVQA\\(VQA, bin.)}} & reported & 81.0 & 83.8 \scriptsize{(+2.8)} & 84.7 \scriptsize{(+3.7)} \\
 & ours & 75.4 & 77.2 \scriptsize{(+1.8)} & 76.7 \scriptsize{(+1.3)} \\
\bottomrule
\end{tabular}

\vspace{0.5em}
\begin{tabular}{l ccc}
\toprule
DeepEyes-7B, V* & Overall & Attribute & Spatial \\
\midrule
reported & 90.1 & 91.3 & 88.2 \\
ours & 88.5 & 92.2 & 82.9 \\
\bottomrule
\end{tabular}
\end{table}

\noindent\textbf{DeepEyes.}
DeepEyes-7B~\cite{zheng2026deepeyes} is an RL-trained agent that zooms by emitting box coordinates during multi-turn reasoning; its base model is Qwen2.5-VL-7B.
We run the official system prompt and tool loop with vLLM, cap every image of a trajectory (the input view and each crop) at the per-view pixel budget, and add the answer-format instruction of our prompts to the question, which raises its accuracy over the official prompt alone (Tab.~\ref{tab:deepeyes}).
Because its base model differs from ours, we compare it with its own base model given a single view rather than with the rows of Tab.~\ref{tab:main}.
Under these budgets DeepEyes trails its base model by $9.8$ / $15.2$ / $15.9$ points at $98$ / $196$ / $392\times32^2$ pixels per image while processing more visual tokens, and it answers with more than six words in 7--20\% of the questions, depending on the benchmark.
Its crops explain much of the drop: on V-CoT-Doc the first crop contains only $15$ / $8$ / $21\%$ of the evidence area (Tab.~\ref{tab:coverage}), and in the trajectories we inspected it often lands near the evidence but misses the line that holds the answer, which the low-resolution view does not let the model read.
The box named by Ground-then-crop is enlarged by its own size on each side and still contains less of the evidence than the region selected by \ours at 98 and 196 tokens; the gap closes at 392 tokens, where the text becomes legible enough to be named.
At 98 tokens the crop of GapSight lies in between ($51$--$60\%$), and it covers less at larger budgets, where GapSight zooms on fewer questions.
\begin{table*}[t]
\centering
\small
\setlength{\tabcolsep}{3.6pt}
\caption{\textbf{DeepEyes-7B under per-view budgets} (accuracy, \%). Every image of a trajectory, the input view and each crop, is capped at the same number of pixels as in our experiments ($\B\times32^2$; with the $28$-pixel tokens of Qwen2.5-VL this allows up to $128$ / $256$ / $512$ tokens per image). Visual tokens are counted over all images of a trajectory. Its base model, Qwen2.5-VL-7B, is shown with a single view.}
\label{tab:deepeyes}
\begin{tabular}{l ccccc c cc}
\toprule
Method & V-CoT-Doc & DocVQA & InfoVQA & ChartQA & TextVQA & Avg. & Visual tokens & Tool calls \\
\midrule
\multicolumn{9}{l}{\emph{$98\times32^2$ pixels per image}} \\
Qwen2.5-VL-7B, single view & 33.7 & 56.0 & 38.7 & 70.0 & 61.8 & 52.0 & 128 & 0.00 \\
DeepEyes-7B, official prompt & 22.7 & 38.1 & 25.8 & 33.0 & 29.6 & 29.8 & 199 & 1.68 \\
DeepEyes-7B + answer-format instruction & 30.7 & 47.8 & 32.0 & 51.4 & 49.1 & 42.2 & 192 & 1.58 \\
\midrule
\multicolumn{9}{l}{\emph{$196\times32^2$ pixels per image}} \\
Qwen2.5-VL-7B, single view & 73.3 & 80.1 & 53.5 & 82.8 & 74.6 & 72.9 & 252 & 0.00 \\
DeepEyes-7B, official prompt & 53.3 & 53.1 & 37.4 & 39.8 & 36.3 & 44.0 & 332 & 1.66 \\
DeepEyes-7B + answer-format instruction & 65.3 & 65.8 & 42.8 & 60.4 & 54.3 & 57.7 & 327 & 1.60 \\
\midrule
\multicolumn{9}{l}{\emph{$392\times32^2$ pixels per image}} \\
Qwen2.5-VL-7B, single view & 87.7 & 91.6 & 67.7 & 87.4 & 82.1 & 83.3 & 508 & 0.00 \\
DeepEyes-7B + answer-format instruction & 80.3 & 77.5 & 54.7 & 65.8 & 58.5 & 67.4 & 564 & 1.57 \\
\bottomrule
\end{tabular}
\end{table*}

\begin{table}[t]
\centering
\small
\setlength{\tabcolsep}{5pt}
\caption{\textbf{Pointing \vs naming: evidence inside the zoomed region} (V-CoT-Doc, 300 questions). Fraction of the area of the evidence boxes that falls inside the region viewed at high resolution; questions without a window (no zoom or a foveated warp) count as 0. Ground-then-crop uses the named box enlarged by its own size on each side; GapSight$^\dagger$ (our re-implementation), the crop of its router; for DeepEyes, the first crop of the trajectory.}
\label{tab:coverage}
\begin{tabular}{l ccc}
\toprule
Region & 98 & 196 & 392 \\
\midrule
\ours, 2B & 0.70 & 0.76 & 0.74 \\
\ours, 4B & 0.68 & 0.73 & 0.74 \\
\ours, 8B & 0.81 & 0.82 & 0.79 \\
\midrule
Ground-then-crop, 2B & 0.39 & 0.53 & 0.67 \\
Ground-then-crop, 4B & 0.33 & 0.40 & 0.65 \\
Ground-then-crop, 8B & 0.34 & 0.40 & 0.60 \\
\midrule
GapSight$^\dagger$, 2B & 0.60 & 0.32 & 0.18 \\
GapSight$^\dagger$, 4B & 0.55 & 0.59 & 0.51 \\
GapSight$^\dagger$, 8B & 0.51 & 0.48 & 0.29 \\
\midrule
DeepEyes-7B (first crop) & 0.15 & 0.08 & 0.21 \\
\bottomrule
\end{tabular}
\end{table}

\noindent\textbf{GapSight.}
\label{sec:supp_gapsight}
GapSight~\cite{zhu2026gapsight} trains a router on the frozen model that reads the hidden states of the low-resolution view and decides whether to add a crop and where, supervised by the loss gap of the answer, the negative log-likelihood given the view alone minus that given the view and a candidate crop.
No code is public, so we re-implement it from the paper and keep everything else as in our setting.
The router of each model is trained on the same 6{,}000 images as our action head, at 98, 196 and 392 tokens.
For each image and budget we score 24 candidate crops, 12 compact windows covering $2$--$15\%$ of the image with aspect ratios between $1/3$ and $3$, each also enlarged $1.8\times$; the $40.4\%$ of the samples with the largest gap are labeled \emph{review} and the $23.5\%$ with the smallest \emph{keep}, the proportions reported by GapSight, and the rest are left out of the gate loss.
The router reads the hidden state of the last prompt position at layer $\ell_{\max}{+}1$ and at the last layer, together with the visual-token states of layer $\ell_{\max}{+}1$ pooled in 39 prior boxes, and predicts the gate, the utility, the prior box and a bounded residual, trained with the five losses of GapSight.
The gate threshold is set for each budget on 10\% held-out images to maximize accuracy (784 tokens uses that of 392), and the model answers from the low-resolution view and the crop with the prompt of \ours.
On held-out images the gate reaches an AUROC of $0.85$ / $0.89$ / $0.88$ (2B / 4B / 8B), while the predicted prior box is the one that overlaps the best candidate crop most for $11$--$15\%$ of the samples labeled \emph{review} (chance: $2.6\%$).
Tab.~\ref{tab:gs_reader} keeps the decisions and crops of GapSight but answers with our RL-adapted reader: \ours stays ahead by $1.7$--$6.2$ points at 98--392 tokens, so its gains come from where and how it looks rather than from the reader.
\begin{table}[t]
\centering
\small
\setlength{\tabcolsep}{4.0pt}
\caption{\textbf{GapSight$^\dagger$ with the same reader} (five-benchmark average accuracy, \%). The second row keeps GapSight's decisions and crops but answers with our RL-adapted reader, so it differs from \ours only in where and how to look. Bold: best per scale and budget. $^\dagger$Our re-implementation.}
\label{tab:gs_reader}
\begin{tabular}{@{}l cccc@{}}
\toprule
Method & 98 & 196 & 392 & 784 \\
\midrule
\multicolumn{5}{l}{\emph{Qwen3-VL-2B}} \\
GapSight$^\dagger$, base reader & 49.9 & 65.5 & 75.4 & 81.1 \\
Our reader, GapSight's crops & 53.9 & 68.0 & 77.2 & \textbf{81.6} \\
\textbf{Ours} & \textbf{60.1} & \textbf{72.6} & \textbf{79.3} & 81.4 \\
\midrule
\multicolumn{5}{l}{\emph{Qwen3-VL-4B}} \\
GapSight$^\dagger$, base reader & 57.3 & 71.8 & 79.1 & 83.2 \\
Our reader, GapSight's crops & 58.7 & 72.3 & 79.6 & 83.5 \\
\textbf{Ours} & \textbf{63.6} & \textbf{76.0} & \textbf{81.6} & \textbf{85.4} \\
\midrule
\multicolumn{5}{l}{\emph{Qwen3-VL-8B}} \\
GapSight$^\dagger$, base reader & 62.6 & 75.2 & 82.8 & 86.6 \\
Our reader, GapSight's crops & 62.9 & 75.8 & 83.4 & 86.7 \\
\textbf{Ours} & \textbf{67.4} & \textbf{81.0} & \textbf{85.1} & \textbf{87.0} \\
\bottomrule
\end{tabular}
\end{table}

\section{Full results}
\label{sec:supp_full}
Fig.~\ref{fig:curves_suppl} plots the five-benchmark average against the budget, and Tabs.~\ref{tab:full_2b}--\ref{tab:full_8b} list all results, including 784 tokens, where the information gap is small and all methods are close.
\begin{figure*}[t]
  \centering
  \includegraphics[width=\textwidth]{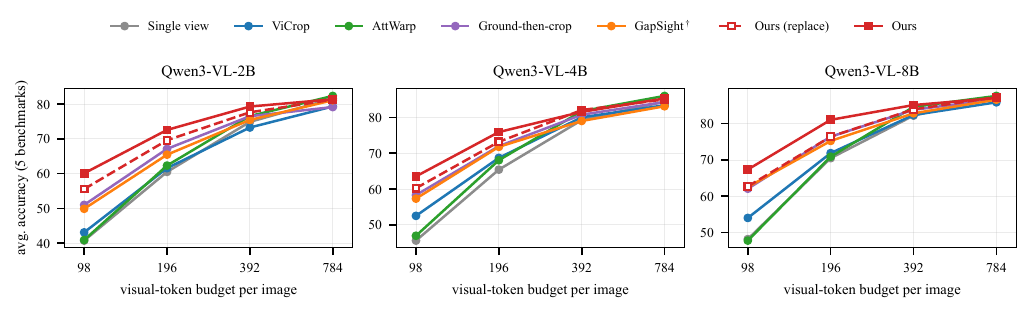}
  \caption{\textbf{Accuracy across visual-token budgets} (average over five benchmarks), including 784 tokens. The dashed line is the replace variant of \ours, which has the token cost of AttWarp. GapSight$^\dagger$: our re-implementation.}
  \label{fig:curves_suppl}
\end{figure*}
\begin{table}[t]
\centering
\footnotesize
\setlength{\tabcolsep}{2.6pt}
\caption{\textbf{Full results of Qwen3-VL-2B} at all budgets (accuracy, \%). Ours (replace) answers from the selected view alone; $^\dagger$our re-implementation (Sec.~\ref{sec:supp_gapsight}). 784 tokens is the boundary regime with a small information gap.}
\label{tab:full_2b}
\begin{tabular}{l ccccc c}
\toprule
Method & V-CoT & Doc & Info & Chart & Text & Avg. \\
\midrule
\multicolumn{7}{l}{\emph{98 tokens}} \\
Single view & 17.7 & 42.3 & 27.3 & 61.6 & 54.9 & 40.8 \\
AttWarp~\cite{dalal2026attwarp} & 19.7 & 32.4 & 29.9 & 61.8 & 60.9 & 40.9 \\
\textbf{Ours (replace)} & 50.0 & \underline{58.2} & \underline{36.0} & \underline{66.6} & \underline{67.1} & \underline{55.6} \\
ViCrop~\cite{zhang2025vicrop} & 22.7 & 40.7 & 28.3 & 62.6 & 61.3 & 43.1 \\
Ground-then-crop & 42.3 & 52.6 & 32.0 & 63.2 & 64.9 & 51.0 \\
GapSight$^\dagger$~\cite{zhu2026gapsight} & \underline{50.7} & 54.6 & 28.8 & 63.0 & 52.4 & 49.9 \\
\textbf{Ours} & \textbf{55.3} & \textbf{63.2} & \textbf{40.1} & \textbf{69.6} & \textbf{72.3} & \textbf{60.1} \\
\midrule
\multicolumn{7}{l}{\emph{196 tokens}} \\
Single view & 52.0 & 66.4 & 37.6 & \underline{77.6} & 68.9 & 60.5 \\
AttWarp~\cite{dalal2026attwarp} & 54.7 & 64.3 & 42.3 & 77.4 & 73.0 & 62.3 \\
\textbf{Ours (replace)} & \underline{68.3} & 72.0 & \textbf{53.4} & \textbf{78.0} & \textbf{76.3} & \underline{69.6} \\
ViCrop~\cite{zhang2025vicrop} & 53.7 & 65.8 & 41.9 & 75.8 & 70.8 & 61.6 \\
Ground-then-crop & \underline{68.3} & 73.1 & 46.4 & 74.6 & 73.2 & 67.1 \\
GapSight$^\dagger$~\cite{zhu2026gapsight} & 63.3 & \underline{74.6} & 44.4 & 76.2 & 68.8 & 65.5 \\
\textbf{Ours} & \textbf{76.0} & \textbf{80.0} & \underline{53.1} & \underline{77.6} & \underline{76.2} & \textbf{72.6} \\
\midrule
\multicolumn{7}{l}{\emph{392 tokens}} \\
Single view & 77.7 & 83.2 & 53.6 & \underline{81.6} & 78.3 & 74.9 \\
AttWarp~\cite{dalal2026attwarp} & 82.3 & 83.9 & 57.1 & 80.4 & \underline{79.0} & 76.6 \\
\textbf{Ours (replace)} & 83.3 & 82.4 & \underline{61.9} & \textbf{81.8} & 78.9 & \underline{77.6} \\
ViCrop~\cite{zhang2025vicrop} & 75.0 & 79.7 & 55.9 & 81.0 & 74.7 & 73.3 \\
Ground-then-crop & \textbf{84.7} & \underline{84.1} & 59.7 & 79.2 & 75.0 & 76.5 \\
GapSight$^\dagger$~\cite{zhu2026gapsight} & 80.0 & 83.9 & 57.4 & 79.2 & 76.7 & 75.4 \\
\textbf{Ours} & \underline{84.3} & \textbf{86.8} & \textbf{64.6} & 81.0 & \textbf{79.7} & \textbf{79.3} \\
\midrule
\multicolumn{7}{l}{\emph{784 tokens}} \\
Single view & \underline{90.3} & \underline{91.2} & 62.6 & \underline{82.4} & \textbf{81.5} & \underline{81.6} \\
AttWarp~\cite{dalal2026attwarp} & \textbf{90.7} & \textbf{91.6} & 67.8 & 80.6 & \underline{81.1} & \textbf{82.4} \\
\textbf{Ours (replace)} & 88.0 & 87.0 & \underline{69.1} & \textbf{82.6} & 79.6 & 81.3 \\
ViCrop~\cite{zhang2025vicrop} & 86.7 & 86.6 & 63.6 & 81.4 & 78.3 & 79.3 \\
Ground-then-crop & 88.3 & 87.7 & 63.8 & 78.4 & 77.5 & 79.1 \\
GapSight$^\dagger$~\cite{zhu2026gapsight} & \textbf{90.7} & \textbf{91.6} & 64.4 & 79.0 & 79.9 & 81.1 \\
\textbf{Ours} & 88.0 & 88.7 & \textbf{69.7} & 82.0 & 78.6 & 81.4 \\
\bottomrule
\end{tabular}
\end{table}

\begin{table}[t]
\centering
\footnotesize
\setlength{\tabcolsep}{2.6pt}
\caption{\textbf{Full results of Qwen3-VL-4B} at all budgets (accuracy, \%). Ours (replace) answers from the selected view alone; $^\dagger$our re-implementation (Sec.~\ref{sec:supp_gapsight}). 784 tokens is the boundary regime with a small information gap.}
\label{tab:full_4b}
\begin{tabular}{l ccccc c}
\toprule
Method & V-CoT & Doc & Info & Chart & Text & Avg. \\
\midrule
\multicolumn{7}{l}{\emph{98 tokens}} \\
Single view & 18.7 & 49.3 & 33.4 & 66.4 & 60.1 & 45.6 \\
AttWarp~\cite{dalal2026attwarp} & 22.3 & 41.8 & 35.8 & 67.2 & 67.5 & 46.9 \\
\textbf{Ours (replace)} & \underline{55.3} & 62.1 & 43.8 & 69.6 & 70.5 & \underline{60.2} \\
ViCrop~\cite{zhang2025vicrop} & 31.3 & 51.5 & 41.1 & 69.6 & 69.0 & 52.5 \\
Ground-then-crop & 43.0 & 57.7 & \textbf{48.1} & \underline{71.2} & \underline{71.5} & 58.3 \\
GapSight$^\dagger$~\cite{zhu2026gapsight} & 49.7 & \underline{62.7} & 41.9 & 67.2 & 65.1 & 57.3 \\
\textbf{Ours} & \textbf{62.0} & \textbf{65.8} & \underline{44.6} & \textbf{73.2} & \textbf{72.2} & \textbf{63.6} \\
\midrule
\multicolumn{7}{l}{\emph{196 tokens}} \\
Single view & 56.0 & 71.4 & 45.1 & 81.8 & 72.9 & 65.4 \\
AttWarp~\cite{dalal2026attwarp} & 60.3 & 69.8 & 49.9 & \underline{82.6} & 77.7 & 68.1 \\
\textbf{Ours (replace)} & 71.3 & 77.1 & \underline{57.3} & 80.8 & \textbf{79.5} & \underline{73.2} \\
ViCrop~\cite{zhang2025vicrop} & 66.0 & 71.4 & 50.5 & 80.2 & 75.6 & 68.7 \\
Ground-then-crop & 66.7 & \underline{79.2} & 56.7 & 82.2 & 75.2 & 72.0 \\
GapSight$^\dagger$~\cite{zhu2026gapsight} & \underline{73.0} & 77.7 & 55.3 & 80.0 & 73.1 & 71.8 \\
\textbf{Ours} & \textbf{78.3} & \textbf{81.0} & \textbf{59.5} & \textbf{83.2} & \underline{77.8} & \textbf{76.0} \\
\midrule
\multicolumn{7}{l}{\emph{392 tokens}} \\
Single view & 83.3 & 86.0 & 61.3 & \underline{85.8} & 80.1 & 79.3 \\
AttWarp~\cite{dalal2026attwarp} & 85.7 & \textbf{88.9} & 64.8 & \underline{85.8} & \textbf{83.8} & \underline{81.8} \\
\textbf{Ours (replace)} & 85.7 & \underline{88.1} & \underline{68.8} & \textbf{86.2} & \underline{81.8} & \textbf{82.1} \\
ViCrop~\cite{zhang2025vicrop} & 84.7 & 85.8 & 67.2 & 84.2 & 78.6 & 80.1 \\
Ground-then-crop & \underline{87.0} & 87.0 & 67.7 & 83.0 & 79.3 & 80.8 \\
GapSight$^\dagger$~\cite{zhu2026gapsight} & 84.0 & 87.2 & 64.8 & 82.6 & 76.7 & 79.1 \\
\textbf{Ours} & \textbf{88.0} & 86.2 & \textbf{70.9} & 85.6 & 77.3 & 81.6 \\
\midrule
\multicolumn{7}{l}{\emph{784 tokens}} \\
Single view & 89.3 & \textbf{93.9} & 70.9 & 84.8 & \textbf{83.3} & 84.5 \\
AttWarp~\cite{dalal2026attwarp} & \underline{92.3} & \underline{93.4} & 75.0 & \textbf{86.4} & \underline{83.2} & \textbf{86.1} \\
\textbf{Ours (replace)} & 90.3 & 92.2 & \underline{75.3} & 85.4 & 82.3 & 85.1 \\
ViCrop~\cite{zhang2025vicrop} & 89.3 & 92.1 & 72.5 & 84.8 & 78.5 & 83.4 \\
Ground-then-crop & 90.0 & \underline{93.4} & 74.6 & 84.6 & 79.2 & 84.4 \\
GapSight$^\dagger$~\cite{zhu2026gapsight} & 88.3 & 90.2 & 74.1 & 84.4 & 78.8 & 83.2 \\
\textbf{Ours} & \textbf{93.3} & 91.7 & \textbf{75.9} & \underline{86.0} & 79.9 & \underline{85.4} \\
\bottomrule
\end{tabular}
\end{table}

\begin{table}[t]
\centering
\footnotesize
\setlength{\tabcolsep}{2.6pt}
\caption{\textbf{Full results of Qwen3-VL-8B} at all budgets (accuracy, \%). Ours (replace) answers from the selected view alone; $^\dagger$our re-implementation (Sec.~\ref{sec:supp_gapsight}). 784 tokens is the boundary regime with a small information gap.}
\label{tab:full_8b}
\begin{tabular}{l ccccc c}
\toprule
Method & V-CoT & Doc & Info & Chart & Text & Avg. \\
\midrule
\multicolumn{7}{l}{\emph{98 tokens}} \\
Single view & 23.7 & 50.5 & 36.9 & 69.2 & 60.9 & 48.2 \\
AttWarp~\cite{dalal2026attwarp} & 25.7 & 41.2 & 39.1 & 65.6 & 67.5 & 47.8 \\
\textbf{Ours (replace)} & \underline{57.3} & 63.8 & 47.3 & 75.8 & 69.4 & \underline{62.7} \\
ViCrop~\cite{zhang2025vicrop} & 33.3 & 54.0 & 37.1 & 73.8 & 71.9 & 54.0 \\
Ground-then-crop & 46.3 & 61.9 & \textbf{53.2} & 75.6 & \underline{73.4} & 62.1 \\
GapSight$^\dagger$~\cite{zhu2026gapsight} & 56.3 & \underline{66.8} & 44.5 & \underline{76.6} & 68.5 & 62.6 \\
\textbf{Ours} & \textbf{66.0} & \textbf{70.1} & \underline{49.3} & \textbf{76.8} & \textbf{74.6} & \textbf{67.4} \\
\midrule
\multicolumn{7}{l}{\emph{196 tokens}} \\
Single view & 66.0 & 78.2 & 49.1 & 84.2 & 74.9 & 70.5 \\
AttWarp~\cite{dalal2026attwarp} & 66.3 & 73.9 & 49.8 & \underline{85.4} & \underline{79.1} & 70.9 \\
\textbf{Ours (replace)} & 76.3 & 80.2 & 62.0 & \textbf{86.0} & 77.5 & \underline{76.4} \\
ViCrop~\cite{zhang2025vicrop} & 69.0 & 76.6 & 52.6 & 84.0 & 77.1 & 71.9 \\
Ground-then-crop & 74.0 & \underline{81.4} & \underline{64.0} & 83.8 & 78.8 & \underline{76.4} \\
GapSight$^\dagger$~\cite{zhu2026gapsight} & \underline{77.7} & 81.3 & 57.8 & 83.2 & 76.1 & 75.2 \\
\textbf{Ours} & \textbf{86.3} & \textbf{84.9} & \textbf{68.5} & \textbf{86.0} & \textbf{79.3} & \textbf{81.0} \\
\midrule
\multicolumn{7}{l}{\emph{392 tokens}} \\
Single view & 87.7 & 89.7 & 65.0 & 87.0 & 81.6 & 82.2 \\
AttWarp~\cite{dalal2026attwarp} & \textbf{90.7} & \textbf{90.9} & 69.5 & \textbf{88.0} & \textbf{83.9} & \underline{84.6} \\
\textbf{Ours (replace)} & 89.7 & 87.7 & 72.0 & \textbf{88.0} & \underline{82.1} & 83.9 \\
ViCrop~\cite{zhang2025vicrop} & 88.3 & 88.6 & 69.1 & 86.6 & 79.1 & 82.4 \\
Ground-then-crop & \underline{90.0} & 88.3 & \underline{73.4} & 86.2 & 81.0 & 83.8 \\
GapSight$^\dagger$~\cite{zhu2026gapsight} & 88.7 & 90.4 & 68.2 & 85.8 & 81.0 & 82.8 \\
\textbf{Ours} & 89.7 & \underline{90.5} & \textbf{76.7} & \underline{87.4} & 81.2 & \textbf{85.1} \\
\midrule
\multicolumn{7}{l}{\emph{784 tokens}} \\
Single view & \underline{93.7} & \textbf{94.5} & 77.0 & 87.2 & \underline{84.6} & \underline{87.4} \\
AttWarp~\cite{dalal2026attwarp} & 93.3 & \textbf{94.5} & 77.9 & \underline{87.4} & \textbf{84.9} & \textbf{87.6} \\
\textbf{Ours (replace)} & 92.7 & 93.3 & 78.0 & \textbf{87.8} & 84.5 & 87.3 \\
ViCrop~\cite{zhang2025vicrop} & 92.0 & 92.3 & 76.2 & 86.6 & 82.1 & 85.8 \\
Ground-then-crop & 92.3 & \underline{93.6} & \textbf{79.1} & 85.8 & 82.9 & 86.7 \\
GapSight$^\dagger$~\cite{zhu2026gapsight} & 93.0 & 93.2 & 78.1 & 86.2 & 82.3 & 86.6 \\
\textbf{Ours} & \textbf{94.0} & 92.6 & \underline{78.5} & 87.0 & 83.0 & 87.0 \\
\bottomrule
\end{tabular}
\end{table}

\noindent\textbf{Full evaluation sets.}
The main experiments use fixed random subsets of 500 questions for the public benchmarks.
Tab.~\ref{tab:fullset} repeats the comparison on the complete validation or test sets for Qwen3-VL-4B at 196 tokens: \ours again achieves the best average ($75.5$ \vs $71.3$ for Ground-then-crop and $71.2$ for GapSight), with margins close to those on the subsets.
\begin{table}[t]
\centering
\small
\setlength{\tabcolsep}{2.4pt}
\caption{\textbf{Full evaluation sets} (Qwen3-VL-4B, $\B{=}196$; accuracy, \%). Number of questions: V-CoT 300, Doc 5,349, Info 2,801, Chart 2,500, Text 5,000. V-CoT-Doc is evaluated in full in all experiments. $^\dagger$Our re-implementation.}
\label{tab:fullset}
\begin{tabular}{l ccccc c}
\toprule
Method & V-CoT & Doc & Info & Chart & Text & Avg. \\
\midrule
Single view & 56.0 & 70.7 & 45.8 & 80.0 & 73.7 & 65.3 \\
ViCrop~\cite{zhang2025vicrop} & 66.0 & 72.9 & 50.7 & 79.2 & 75.6 & 68.9 \\
AttWarp~\cite{dalal2026attwarp} & 60.3 & 69.8 & 50.2 & \underline{80.5} & \textbf{78.6} & 67.9 \\
Ground-then-crop & 66.7 & 76.9 & \underline{58.0} & 79.4 & 75.8 & 71.3 \\
GapSight$^\dagger$~\cite{zhu2026gapsight} & \underline{73.0} & \underline{78.4} & 53.9 & 77.7 & 72.9 & 71.2 \\
\textbf{Ours (replace)} & 71.3 & 77.4 & 57.3 & 79.8 & \underline{77.5} & \underline{72.6} \\
\textbf{Ours} & \textbf{78.3} & \textbf{81.7} & \textbf{60.3} & \textbf{80.9} & 76.4 & \textbf{75.5} \\
\bottomrule
\end{tabular}
\end{table}

\section{Additional analyses}
\label{sec:supp_analysis}
\noindent\textbf{Is the zoomed view used?}
Tab.~\ref{tab:swap} replaces the window placed by the pointer with a window of the same size at a random location or centered on the ground-truth evidence.
Random windows barely help, pointed windows help substantially, and the ground-truth-centered window indicates the remaining headroom of window placement.
The ``Single'' column of this table uses the RL-adapted reader.
\begin{table}[t]
\centering
\footnotesize
\setlength{\tabcolsep}{2.6pt}
\caption{\textbf{Is the zoomed view used?} Accuracy when the same square window ($0.5\min(W,H)$) is placed at random, by the pre-answer attention, by the \look token, or centered on the ground-truth evidence (upper bound; V-CoT-Doc only). Pointed windows are far better than random ones, so the model relies on the content of the zoomed view.}
\label{tab:swap}
\begin{tabular}{ll ccccc}
\toprule
& Tok. & Single & Random & Pre-ans. & \look & GT \\
\midrule
\multirow{3}{*}{\shortstack[l]{4B\\V-CoT-Doc}} & 98 & 19.0 & 32.3 & 61.3 & 61.3 & 69.3 \\
& 196 & 57.7 & 60.7 & 80.7 & 80.0 & 84.0 \\
& 392 & 82.7 & 80.7 & 88.7 & 88.7 & 90.0 \\
\midrule
\multirow{3}{*}{\shortstack[l]{4B\\5-avg.}} & 98 & 45.6 & 49.1 & 62.4 & 62.9 & -- \\
& 196 & 66.0 & 65.8 & 75.4 & 75.1 & -- \\
& 392 & 79.3 & 77.3 & 81.2 & 81.1 & -- \\
\midrule
\multirow{3}{*}{\shortstack[l]{2B\\V-CoT-Doc}} & 98 & 17.3 & 32.3 & 51.7 & 54.0 & 64.0 \\
& 196 & 54.0 & 59.0 & 78.0 & 76.3 & 79.3 \\
& 392 & 79.0 & 80.0 & 85.7 & 86.0 & 88.0 \\
\bottomrule
\end{tabular}
\end{table}

\noindent\textbf{What does zooming fix and what does it break?}
Tab.~\ref{tab:rescue} compares every answer with the single view of the base model.
At 98 and 196 tokens, \ours corrects the most single-view errors ($37.9$--$47.6\%$) at every scale and breaks the fewest correct answers ($5.4$--$9.5\%$), except for 2B at 196 tokens, where GapSight, which zooms on only half of the questions, breaks $6.9\%$ \vs $7.1\%$; at 392 tokens, where the information gap is small, the methods are closer.
Against the single view of its own RL-adapted reader the counts barely change (\eg $+295$ / $-94$ answers \vs $+308$ / $-97$ for 4B at 196 tokens), so the corrections come from the zoomed view rather than from the reader.
\begin{table}[t]
\centering
\small
\setlength{\tabcolsep}{2.4pt}
\caption{\textbf{What zooming fixes and what it breaks}, relative to the single view of the base model (five benchmarks pooled, 2{,}300 questions per cell). Rescue: share of the single-view errors that become correct; harm: share of the single-view correct answers that become wrong (an answer counts as correct when its score is at least $0.5$; strict match on V-CoT-Doc). Bold: highest rescue and lowest harm per scale and budget. $^\dagger$Our re-implementation.}
\label{tab:rescue}
\begin{tabular}{@{}l cc cc cc@{}}
\toprule
& \multicolumn{2}{c}{98} & \multicolumn{2}{c}{196} & \multicolumn{2}{c}{392} \\ \cmidrule(lr){2-3} \cmidrule(lr){4-5} \cmidrule(lr){6-7}
Method & Rescue & Harm & Rescue & Harm & Rescue & Harm \\
\midrule
\multicolumn{7}{l}{\emph{Qwen3-VL-2B}} \\
\textbf{Ours} & \textbf{37.9} & \textbf{8.0} & \textbf{39.9} & 7.1 & \textbf{33.4} & 5.6 \\
\textbf{Ours (replace)} & 35.4 & 14.9 & 38.1 & 9.7 & 31.2 & 7.0 \\
Ground-then-crop & 27.8 & 15.5 & 31.1 & 10.4 & 29.8 & 8.4 \\
GapSight$^\dagger$~\cite{zhu2026gapsight} & 24.8 & 16.1 & 22.2 & \textbf{6.9} & 15.5 & \textbf{4.8} \\
ViCrop~\cite{zhang2025vicrop} & 15.1 & 15.0 & 18.8 & 10.2 & 19.0 & 8.4 \\
AttWarp~\cite{dalal2026attwarp} & 17.0 & 21.4 & 27.7 & 13.7 & 29.6 & 7.6 \\
\midrule
\multicolumn{7}{l}{\emph{Qwen3-VL-4B}} \\
\textbf{Ours} & \textbf{39.6} & \textbf{9.5} & \textbf{41.6} & \textbf{6.2} & 30.9 & 5.6 \\
\textbf{Ours (replace)} & 36.6 & 12.3 & 38.3 & 8.3 & 29.6 & \textbf{4.4} \\
Ground-then-crop & 33.8 & 11.5 & 32.4 & 6.6 & 30.5 & 6.2 \\
GapSight$^\dagger$~\cite{zhu2026gapsight} & 30.2 & 10.9 & 33.2 & 8.2 & 24.1 & 6.8 \\
ViCrop~\cite{zhang2025vicrop} & 23.7 & 11.2 & 24.0 & 7.8 & 21.4 & 4.8 \\
AttWarp~\cite{dalal2026attwarp} & 21.8 & 19.7 & 32.0 & 11.0 & \textbf{31.6} & 4.9 \\
\midrule
\multicolumn{7}{l}{\emph{Qwen3-VL-8B}} \\
\textbf{Ours} & \textbf{42.4} & \textbf{6.7} & \textbf{47.6} & \textbf{5.4} & \textbf{38.4} & 4.7 \\
\textbf{Ours (replace)} & 36.7 & 9.8 & 36.3 & 7.0 & 29.9 & 4.4 \\
Ground-then-crop & 35.1 & 7.7 & 36.8 & 6.9 & 34.4 & 5.5 \\
GapSight$^\dagger$~\cite{zhu2026gapsight} & 34.0 & 7.6 & 30.8 & 6.3 & 22.6 & \textbf{3.8} \\
ViCrop~\cite{zhang2025vicrop} & 21.3 & 8.7 & 23.3 & 7.7 & 24.9 & 5.1 \\
AttWarp~\cite{dalal2026attwarp} & 21.6 & 20.9 & 31.7 & 11.6 & 33.2 & 3.9 \\
\bottomrule
\end{tabular}
\end{table}

\noindent\textbf{Which pointer drives the view?}
Tab.~\ref{tab:supp_pointer} extends Tab.~\ref{tab:ablation}a.
With two views, the adaptive window driven by the \look token beats the same window driven by the pre-answer attention of the same layers at every scale, by $9.3$--$10.7$ points at 98 tokens.
When the zoomed view is the only view, the \look token also improves square windows ($+3.2$ / $+6.4$ / $+7.7$) and the foveated warp ($+7.1$ / $+5.0$ / $+1.2$) over the pre-answer attention of the same layers at 98 / 196 / 392 tokens; for the adaptive window alone, the gain at 98 tokens ($+9.4$) turns into a loss at 392 tokens, where the tight window discards context that the single view still provides.
Reading the pre-answer attention at other depths does not close the gap at 98 tokens: at the fixed relative depth used by ViCrop (layer 17 of 28 for 2B, 22 of 36 for 4B), the adaptive window reaches $55.2$ / $53.2$ points and the foveated warp $47.6$ / $48.4$, against $59.4$ / $62.5$ and $50.8$ / $54.2$ for the \look token; at the best four layers, located per model with the evidence boxes, the 4B model reaches $56.8$ and $51.5$ (Tab.~\ref{tab:ablation}a).
\begin{table}[t]
\centering
\small
\setlength{\tabcolsep}{4.0pt}
\caption{\textbf{Pre-answer attention \vs \look token as the pointer of the same view} (five-benchmark average accuracy; pre-answer attention read from the layers of the \look token). (a)~Adaptive window in the two-view protocol, all scales. (b)~Replace protocol on Qwen3-VL-4B: the zoomed view is the only view, so a misplaced window cannot be compensated by the low-resolution view.}
\label{tab:supp_pointer}
\begin{tabular}{l ccc}
\toprule
Setting & 98 & 196 & 392 \\
\midrule
\multicolumn{4}{l}{\emph{(a) Two views, adaptive window}} \\
2B, pre-answer & 48.7 & 68.7 & 77.3 \\
2B, \look token & 59.4 & 73.0 & 79.3 \\
4B, pre-answer & 52.6 & 72.3 & 80.7 \\
4B, \look token & 62.5 & 75.5 & 82.2 \\
8B, pre-answer & 56.5 & 77.2 & 85.2 \\
8B, \look token & 65.8 & 80.0 & 85.3 \\
\midrule
\multicolumn{4}{l}{\emph{(b) Zoomed view alone (4B)}} \\
Square $.5$, pre-answer & 51.9 & 57.6 & 61.5 \\
Square $.5$, \look token & 55.1 & 64.0 & 69.2 \\
Adaptive window, pre-answer & 51.8 & 71.3 & 80.5 \\
Adaptive window, \look token & 61.2 & 71.2 & 76.4 \\
Fovea, pre-answer & 42.7 & 65.8 & 80.5 \\
Fovea, \look token & 49.8 & 70.8 & 81.7 \\
\bottomrule
\end{tabular}
\end{table}

\noindent\textbf{Development runs of pointer training.}
Tab.~\ref{tab:look_training_dev} traces the evidence mass of the \look token in early runs with the zoomed view alone, as training signals are added; these runs also provide the shared-attention comparison of Observation~2.
\begin{table}[t]
\centering
\small
\setlength{\tabcolsep}{4.5pt}
\caption{\textbf{Training the pointer, development runs} (Qwen3-VL-2B, zoomed view alone). Evidence mass of the pointer on V-CoT-Doc at 196 / 392 tokens, and change of single-view reading accuracy relative to the same RL fine-tuning without pointer losses (mean over 11 benchmark--budget settings). Training signals accumulate from top to bottom. The \look token reaches the highest evidence mass without affecting reading, whereas supervising the shared attention degrades reading.}
\label{tab:look_training_dev}
\begin{tabular}{l cc c}
\toprule
& \multicolumn{2}{c}{Evidence mass} & Reading \\ \cmidrule(lr){2-3}
Pointer training & 196 & 392 & $\Delta$ acc. \\
\midrule
\look token, untrained & .068 & .057 & -- \\
\quad + reward of the selected views & .137 & .145 & $+0.0$ \\
\quad + hindsight distillation & .152 & .161 & $0.0$ \\
\quad + sharpened target & \textbf{.221} & \textbf{.257} & $+0.1$ \\
\midrule
Shared attention, supervised & .217 & .247 & $-0.7$ \\
\bottomrule
\end{tabular}
\end{table}

\noindent\textbf{Features of the action head.}
The eight scalar features are the log budget, the log information gap, the log aspect ratio, the entropy and peak of $\gl$, and the largest window mass at three crop sizes.
Held-out expected score on the counterfactual set of the replace variant: best fixed view per budget $0.607$ / $0.674$ (2B / 4B), scalar features $0.647$ / $0.696$, \look hidden states $0.667$ / $0.723$.
The \look hidden states alone predict the best view better than hand-crafted features that include the information gap.

\noindent\textbf{How does the action head decide?}
Fig.~\ref{fig:decisions} groups the decisions of the 4B action head by the information gap.
As the gap grows, the head adds a zoomed view to more questions ($74\%$ in the lowest quintile of $\rho$, $99\%$ in the highest) and chooses smaller windows (median area from $0.38$ to $0.17$ of the image), in line with the best fixed views of Observation~3.
The 2B and 8B heads zoom on almost every question; the 8B head adapts the kind of view instead, choosing the foveated warp for $67\%$ of the questions in the lowest quintile and for at most $2\%$ in the top three.
\begin{figure}[t]
  \centering
  \includegraphics[width=\linewidth]{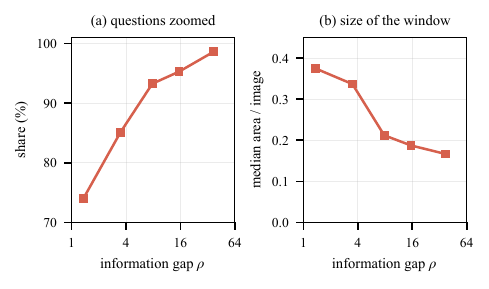}
  \caption{\textbf{Decisions of the action head} (Qwen3-VL-4B, five benchmarks; 98, 196 and 392 tokens pooled into quintiles of the information gap $\rho$, as in Fig.~\ref{fig:infogap}a). (a)~Share of questions that receive a zoomed view. (b)~Median area of the zoomed window relative to the image (foveated warps excluded).}
  \label{fig:decisions}
\end{figure}

\noindent\textbf{Per-layer evidence mass.}
Fig.~\ref{fig:layers} computes the evidence mass of Fig.~\ref{fig:attn} layer by layer.
The attention of the answer tokens localizes the evidence from the middle layers on, whereas the pre-answer attention does so only in the top layers, and the layers where it peaks differ across scales (shaded in gray: the best band of four layers, 17--20 of 28 for 2B, 21--24 and 28--31 of 36 for 4B and 8B).
The \look token is read from a fixed band of middle layers (shaded in red), where it exceeds both.
\begin{figure*}[t]
  \centering
  \includegraphics[width=\textwidth]{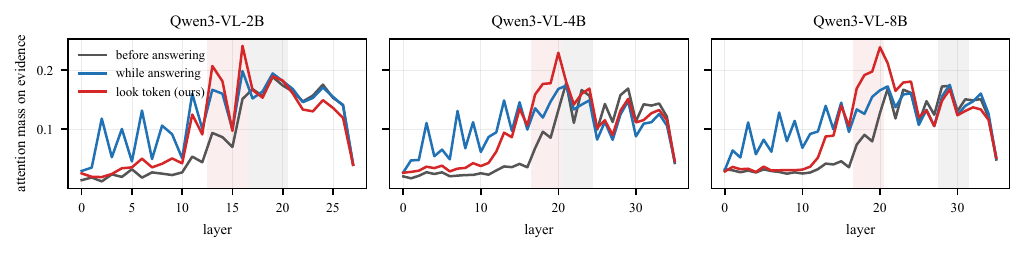}
  \caption{\textbf{Per-layer evidence mass} (V-CoT-Doc, 196 tokens; each layer averaged over heads, normalized and smoothed on its own). Red band: layers from which the \look token is read; gray band: best four layers of the pre-answer attention, located with the evidence boxes.}
  \label{fig:layers}
\end{figure*}

\section{Theoretical analysis}
\label{sec:supp_theory}
We formalize the properties behind the design of \ours and the regularities of Sec.~\ref{sec:observations}.
The settings are idealized; the aim is to make explicit why the observed trends arise.

\begin{proposition}[The information gap bounds what a zoomed view adds]
\label{prop:gap}
Let an image have $N$ native pixels and let every view be encoded with at most $P=\B\times32^2$ pixels, so that the information gap is $\rho=N/P$.
A view of a region with relative area $A\in(0,1]$ samples the region with at most $\min(1, P/(AN))$ of the native pixel density, since resampling above the native resolution adds no detail, and the low-resolution view samples it with $\min(1,1/\rho)$.
The density gained on evidence inside the region is therefore
\begin{equation}
G(\rho,A)=\max\big(1,\ \min(\rho,\,1/A)\big).
\end{equation}
\end{proposition}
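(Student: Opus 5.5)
The plan is to compute the two sampling densities explicitly, relative to the native density, and take their ratio. Density here means pixels of the encoded view per native pixel of the region. Two constraints bound this density: the view has at most $P$ pixels, and it cannot resolve more detail than the $N$ native pixels provide. First, for the zoomed view of a region of relative area $A$, the region contains $AN$ native pixels. If it is resized to fill the budget, it receives at most $P$ pixels, so its density is at most $P/(AN)$. Upsampling beyond the native grid creates no new samples of the scene, so the effective density is capped at $1$, giving $d_{\text{zoom}}\le\min(1,P/(AN))=\min(1,1/(\rho A))$. Second, the low-resolution view $v=\mathrm{resize}(I,\B)$ spreads at most $P$ pixels uniformly over all $N$ native pixels. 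Every subregion, and in particular the evidence, is therefore sampled at $d_{\text{low}}=\min(1,P/N)=\min(1,1/\rho)$, again with the cap at native resolution.

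Next I would form the gain as the ratio $d_{\text{zoom}}/d_{\text{low}}$ and split into cases on $\rho$. If $\rho\le1$, then $d_{\text{low}}=1$ and $d_{\text{zoom}}\le1$, so the ratio is at most $1$. If $\rho>1$, then $d_{\text{low}}=1/\rho$. There are two subcases. If $\rho A\le1$, the ratio is $\rho$. If $\rho A>1$, it is $\rho/(\rho A)=1/A$. In both subcases the ratio equals $\min(\rho,1/A)$, which is at least $1$ because $\rho>1$ and $A\le1$. Together, the two cases give $\max(1,\min(\rho,1/A))$, the expression stated in the proposition.

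The reason for the outer $\max(1,\cdot)$ also needs stating. In the two-view protocol the answering pass still contains $v$, so the effective density on the evidence is the larger of the two views' densities. The gain over $v$ therefore never falls below $1$. This reading also matches the main-text claim that zooming adds ``nothing when $\rho\le1$''.

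The main obstacle is conceptual rather than computational. I have to justify that ``density'' is the right measure of added detail, namely that resampling cannot raise effective resolution above the native grid. I would state this as the modelling assumption of the idealized setting, treating the native image as the band-limited source. I would also note that the bound is attained when the view is rescaled to exactly $P$ pixels with no aspect-ratio loss. For aspect-preserving resizes, the bound holds with inequality.
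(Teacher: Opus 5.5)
Your proposal is correct and follows the paper's proof: compute the capped densities $\min(1,1/(\rho A))$ and $\min(1,1/\rho)$, take their ratio, and split into $\rho\le1$ and $\rho>1$, with the subcases $\rho A\le1$ and $\rho A>1$. The one difference is that you justify the outer $\max(1,\cdot)$ through the two-view protocol, which the paper does not need: at the attained bound both densities equal $1$ when $\rho\le1$, and there $\min(\rho,1/A)=\rho\le1$, so the outer maximum only encodes this case.
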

\noindent\emph{Proof.}
For $\rho\le1$ both densities equal $1$.
For $\rho>1$ the low-resolution density is $1/\rho$, and the zoomed density is $1/(\rho A)$ if $\rho A\ge1$ and $1$ otherwise, which gives the ratio $\min(\rho,1/A)$.

\noindent Three consequences match Observation~3.
(i)~When $\rho\le1$ no view adds detail, and for small $\rho$ the gain is bounded by $\rho$ itself, while a crop removes context; zooming indeed costs points for $\rho<1.6$ (Fig.~\ref{fig:infogap}).
(ii)~For a fixed window the gain grows with $\rho$ until $\rho=1/A$.
(iii)~Shrinking the window below $A=1/\rho$ adds no detail but raises the risk of missing the evidence, so the useful window shrinks as $\rho$ grows: the best fixed view moves from gentle views at small gaps to strong magnification at large gaps, and the action head receives $\log\rho$ as an input feature.

\begin{proposition}[Concentration sets the magnification of the adaptive window]
\label{prop:conc}
Consider one image axis, normalized to $[0,1]$, with evidence on the interval $E$ of length $w$ centered on $1/2$, and a pointer marginal that places mass $q$ uniformly on $E$ and $1-q$ uniformly on $[0,1]$.
The window spanning the central fraction $p$ of the pointer mass has length
\begin{equation}
W(q)=\begin{cases}\dfrac{p-q}{1-q}, & q<q^\star,\\[2mm] \dfrac{pw}{q+w(1-q)}, & q\ge q^\star,\end{cases}
\end{equation}
with $q^\star=(p-w)/(1-w)$; it decreases monotonically from $W(0)=p$ through $W(q^\star)=w$ to $W(1)=pw$.
\end{proposition}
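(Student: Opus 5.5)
We will compute the window directly from the density of the pointer marginal and then check the monotonicity claims. On $[0,1]$ the density is $f(x)=(1-q)+\tfrac{q}{w}\mathbf{1}_E(x)$. This density is symmetric about $1/2$, so the central fraction $p$ of the mass is an interval $[1/2-W/2,\,1/2+W/2]$. Its length $W$ is determined by the condition $\int f=p$ over that interval, that is, by requiring mass $(1-p)/2$ in each tail. We split into two cases according to whether the window is wider than $E$ or contained in it.

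\emph{Case $W\ge w$.} The window contains all of $E$, so its mass is $q+(1-q)W$. Setting this equal to $p$ gives $W=(p-q)/(1-q)$. This case is consistent exactly when $(p-q)/(1-q)\ge w$. Rearranging, $p-q\ge w-wq$, which is equivalent to $q\le (p-w)/(1-w)=q^\star$. We assume here that $p\ge w$, which makes $q^\star\ge 0$.

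\emph{Case $W\le w$.} The window lies inside $E$, where the density is constant, equal to $(1-q)+q/w$. The mass condition is $W\big((1-q)+q/w\big)=p$, so $W=pw/(q+w(1-q))$. This case is consistent when $W\le w$, that is, $p\le q+w(1-q)$, which again is $q\ge q^\star$. At $q=q^\star$ both formulas give $w$, so $W$ is continuous and the two cases cover $[0,1]$.

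\emph{Endpoint values and monotonicity.} The endpoint values follow by substitution: $W(0)=p$, $W(q^\star)=w$, and $W(1)=pw/1=pw$. For monotonicity we differentiate each branch.
\begin{itemize}
\item On the first branch, $\frac{d}{dq}\frac{p-q}{1-q}=\frac{p-1}{(1-q)^2}\le0$.
\item On the second branch, the denominator $q(1-w)+w$ is increasing in $q$ because $w<1$, so $W$ is decreasing.
\end{itemize}
Together with continuity at $q^\star$, this gives that $W$ is monotonically decreasing on $[0,1]$.

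\emph{Main obstacle.} The only delicate point is bookkeeping rather than analysis. One has to justify that the central-mass window is symmetric, which follows from the quantile definition (marginal quantiles $(1-p)/2$ and $(1+p)/2$) combined with the symmetry of $f$. One also has to state the implicit assumption $w\le p<1$: it keeps $q^\star\in[0,1)$, and the stated chain $p\ge w\ge pw$ requires it. If $p<w$, only the second branch occurs, and we would note this as a degenerate case.
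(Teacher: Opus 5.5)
Your proposal is correct and follows the paper's proof: both take the symmetric window $[\tfrac12-t,\tfrac12+t]$, split on whether it contains $E$ ($2t\ge w$) or lies inside it, solve the mass condition for the length, and match the two branches at $q^\star$. You also spell out what the paper leaves implicit: the consistency conditions that select each branch, the derivative check for monotonicity, and the standing assumption $w\le p<1$.
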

\noindent\emph{Proof.}
The symmetric interval $[\tfrac12-t,\tfrac12+t]$ carries mass $M(t)=q\min(1,2t/w)+(1-q)\,2t$.
Solving $M(t)=p$ with $2t\ge w$ gives the first case, valid when $q<q^\star$, and with $2t<w$ the second; both expressions equal $w$ at $q^\star$ and decrease in $q$.

\noindent A diffuse pointer ($q\to0$) yields a window spanning a fraction $p$ of the axis, which barely magnifies for $p=0.8$; a pointer with $q=q^\star$ yields the smallest window that still contains the evidence, magnifying it by $1/w$ along the axis; a more concentrated pointer clips the evidence to its central fraction $p/(q+w(1-q))$.
Two pointers that place their peak at the same location can thus produce very different windows.
This is why the \look token, which points where the best layers point but concentrates more mass on the evidence (Sec.~\ref{sec:analysis}), gains most on the adaptive window and the foveated warp, while fixed windows are insensitive to the pointer (Tab.~\ref{tab:ablation}a); it is also consistent with the clipped window in Fig.~\ref{fig:qual_failure}, and for keeping two values of $p$ and fixed windows in the action set.

\begin{proposition}[Hindsight distillation estimates the expected answer-time attention]
\label{prop:distill}
Let $x$ denote the prompt (low-resolution view and question) and $\hat g$ the random target of Eq.~\eqref{eq:target}, obtained from rollouts sampled for $x$.
Over all maps $g$ on the image grid, the expected loss $\mathbb{E}_{\hat g\mid x}\,\mathrm{KL}(\hat g\,\|\,g)$ is minimized by $g^\star(x)=\mathbb{E}[\hat g\mid x]$, and the excess loss of any $g$ equals $\mathrm{KL}(g^\star\,\|\,g)$.
\end{proposition}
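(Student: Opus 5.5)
The plan is the standard decomposition of expected cross-entropy. Write the grid as a finite set of cells $c$ and treat every map as a probability vector on it; the target $\hat g$ is normalized by construction (Eq.~\eqref{eq:target}), and $g$ ranges over the simplex. Expanding the divergence gives
\begin{equation}
\mathrm{KL}(\hat g\,\|\,g)=\sum_c \hat g_c\log\hat g_c-\sum_c \hat g_c\log g_c .
\end{equation}
The first term does not depend on $g$. Taking the conditional expectation given $x$ and using linearity, the second term becomes $-\sum_c g^\star_c(x)\log g_c$ with $g^\star(x)=\mathbb{E}[\hat g\mid x]$. This is the cross-entropy of $g$ relative to $g^\star$, and $g^\star$ is itself a probability vector, since an average of distributions on the grid is a distribution.

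Next, add and subtract $\sum_c g^\star_c\log g^\star_c$ to obtain the identity
\begin{equation}
\mathbb{E}_{\hat g\mid x}\mathrm{KL}(\hat g\,\|\,g)=\mathbb{E}_{\hat g\mid x}\mathrm{KL}(\hat g\,\|\,g^\star)+\mathrm{KL}(g^\star\,\|\,g).
\end{equation}
To check it, expand the first term on the right: its $g$-free entropy part cancels against the left, and its linear part $-\sum_c g^\star_c\log g^\star_c$ cancels against the entropy part of $\mathrm{KL}(g^\star\|g)$. The first term on the right is the loss attained by $g^\star$. The difference between the loss of any $g$ and that of $g^\star$ is therefore exactly $\mathrm{KL}(g^\star\|g)$, which is the claimed excess loss. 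By Gibbs' inequality this excess is nonnegative and vanishes only at $g=g^\star$, so $g^\star$ is the (unique) minimizer.

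The only step needing care is measure-theoretic bookkeeping rather than an inequality. First, I would assume $\mathbb{E}[\hat g_c\log\hat g_c\mid x]$ is finite. This holds automatically, since $\hat g_c\in[0,1]$ and the grid is finite, so the exchange of expectation and sum is justified. Second, I would adopt the usual conventions $0\log 0=0$ and $\mathrm{KL}=+\infty$ when $g$ vanishes on a cell where $g^\star>0$. With these conventions the identity holds in the extended reals as well, so the statement covers every map $g$. Finally, I would remark that the pointer $\gl$ in Eq.~\eqref{eq:pointer} is strictly positive after Gaussian smoothing and normalization, so the infinite case never arises in practice.
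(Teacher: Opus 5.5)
Your proposal is correct and follows the paper's proof. Both expand $\mathbb{E}\,\mathrm{KL}(\hat g\,\|\,g)$, use linearity to turn the cross term into the cross-entropy $H(g^\star)+\mathrm{KL}(g^\star\,\|\,g)$, and note that only $\mathrm{KL}(g^\star\,\|\,g)$ depends on $g$. Identifying the constant as the loss $\mathbb{E}\,\mathrm{KL}(\hat g\,\|\,g^\star)$ of $g^\star$, invoking Gibbs' inequality for uniqueness, and fixing the extended-real conventions are careful additions that the paper leaves implicit, not a different route.
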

\noindent\emph{Proof.}
$\mathbb{E}\,\mathrm{KL}(\hat g\,\|\,g)=\mathbb{E}\sum_i\hat g_i\log\hat g_i-\sum_i g^\star_i\log g_i$, and the second term is the cross-entropy $H(g^\star)+\mathrm{KL}(g^\star\,\|\,g)$; only $\mathrm{KL}(g^\star\,\|\,g)$ depends on $g$.

\noindent The \look token is thus trained to output, from the prompt alone, where the model's correct answers attend on average after sharpening.
It extracts this localization without sampling or decoding an answer at test time, and it averages out the variation between individual answers.

\begin{proposition}[Pointing does not change how a view is read]
\label{prop:iso}
Let $\theta_r$ be the parameters of the reader and $\theta_\ell$ those of the \look token (embedding offset and position-specific LoRA).
(a)~In the prefill on $[v,q,\look]$, the hidden states of all positions before \look are independent of $\theta_\ell$.
(b)~For a given view $z$, the answer distribution $p(y\mid v,z,q)$ is independent of $\theta_\ell$.
(c)~The reader update computed from a batch of rollouts is the same with and without $\mathcal{L}_{\text{look}}$.
\end{proposition}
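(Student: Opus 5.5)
The plan is to argue structurally from the causal masking of the decoder, the placement of \look at the end of the prompt, and the rule that its parameters act only at its own position. The three parts are taken in order; (b) and (c) both rest on the fact that the answering pass never contains the \look position.

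For (a), induct over layers. The input embeddings of positions before \look come from the frozen vocabulary (the learnable offset is added only to the \look embedding), so at layer $0$ they do not depend on $\theta_\ell$. Suppose the hidden states of all earlier positions at layer $\ell$ are independent of $\theta_\ell$. Under causal attention, the layer-$(\ell{+}1)$ state at a position $j$ before \look is a function only of layer-$\ell$ states at positions $\le j$. These positions all precede \look, so they exclude it. The position-specific LoRA is applied only to the \look row of the projections, so position $j$ sees only the reader weights $\theta_r$. Hence its new state is independent of $\theta_\ell$. The induction runs through all layers, and for layers above $\ell_{\max}$ there is no LoRA at all. This is the only step where the precise implementation matters. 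I would state explicitly the assumptions that \look is the final prompt position, that the mask is causal, and that the LoRA is masked by position so its output is zero elsewhere.

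For (b), the answering pass runs on $[v,z,q]$ followed by the generated tokens $y$, and \look is absent from it. Every position therefore uses only $\theta_r$ and fixed embeddings, so $p(y\mid v,z,q)=\prod_t p_{\theta_r}(y_t\mid v,z,q,y_{<t})$ is a function of $\theta_r$ alone. The view $z=T_a(I;\gl)$ does depend on $\theta_\ell$ through $\gl$, which is why the statement conditions on a given $z$. I would remark that pointing changes which view is read, but not how a given view is read.

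For (c), the reader update is $-\eta\nabla_{\theta_r}$ of the total loss on a fixed batch of rollouts. $\mathcal{L}_{\text{look}}=\mathrm{KL}(\hat g\,\|\,\gl)$ depends on $\theta_r$ through two channels. The first is the target $\hat g$, which comes from teacher-forced attention under $\theta_r$; it is a constant because targets are computed without gradient. The second is $\gl$, which depends on $\theta_r$ through the shared frozen-path weights. The method discards reader gradients after back-propagating $\mathcal{L}_{\text{look}}$, which gives $\nabla_{\theta_r}(\mathcal{L}_{\text{GRPO}}+\mathcal{L}_{\text{look}})\mapsto\nabla_{\theta_r}\mathcal{L}_{\text{GRPO}}$. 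The main obstacle is making this precise. The rollouts themselves are sampled on views built from $\gl$, so the rollouts must be treated as fixed and the claim stated per batch, as written. The gradient-discarding step must also be taken as part of the algorithm, since $\gl$ does depend on $\theta_r$ and the claim would fail without it. With the rollouts, advantages and views held fixed, the GRPO gradient is then identical in both cases, and any optimizer state update, for example AdamW, is identical too.
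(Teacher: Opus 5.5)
Your proposal is correct and follows the paper's proof part by part. For (a) you use causal masking together with the fact that the LoRA and embedding offset act only at the \look position; for (b) the absence of \look from the answering pass; for (c) the discarding of the reader gradients of $\mathcal{L}_{\text{look}}$. Your layer-wise induction and your conditioning on fixed rollouts and views just spell out what the paper's three one-line arguments leave implicit.

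One small caveat on your closing remark about the optimizer. Identical reader gradients give an identical AdamW step only if the gradient clipping listed in the hyperparameters is computed over the reader's gradients alone. A global-norm clip that also included the \look gradients would let $\mathcal{L}_{\text{look}}$ rescale the reader update.
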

\noindent\emph{Proof.}
(a)~Under the causal mask no earlier position attends to \look, the last position, and the LoRA acts only at that position.
(b)~\look is absent from the answering pass.
(c)~The gradients of $\mathcal{L}_{\text{look}}$ with respect to $\theta_r$ are discarded.

\noindent Training the pointer therefore affects reading only through the choice of the view.
Supervising the shared pre-answer attention instead moves $\theta_r$ and changes the answer distribution for every view, which is the reading cost of Observation~2.

\begin{proposition}[The head follows the exact policy gradient]
\label{prop:head}
For $\mathcal{L}_{\text{head}}=-\sum_{a}\pi_\theta(a\mid\phi)(s_a-\bar s)$, $\nabla_\theta\mathcal{L}_{\text{head}}=-\nabla_\theta\,\mathbb{E}_{a\sim\pi_\theta}[s_a]$.
\end{proposition}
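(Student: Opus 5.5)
The plan is to compute both sides directly and show they agree term by term. The key fact is that the scores $s_a$ and the baseline $\bar s$ do not depend on $\theta$. They are recorded outcomes of executing each view on the counterfactual set, and $\bar s$ is their plain mean over $\mathcal{A}$, not a $\pi_\theta$-weighted mean. So the whole $\theta$-dependence of $\mathcal{L}_{\text{head}}$ sits in the probabilities $\pi_\theta(a\mid\phi)$. I will state this explicitly, because it is where the claim could fail: if $\bar s$ were the on-policy average $\sum_a\pi_\theta(a\mid\phi)s_a$, differentiating it would produce an extra term.

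First, expand $\mathcal{L}_{\text{head}}=-\sum_a\pi_\theta(a\mid\phi)s_a+\bar s\sum_a\pi_\theta(a\mid\phi)$. The first sum is exactly $-\mathbb{E}_{a\sim\pi_\theta}[s_a]$, since $\mathcal{A}$ is finite. For the second, the normalization $\sum_a\pi_\theta(a\mid\phi)=1$ holds for every $\theta$ because $\pi_\theta$ is a softmax over $\mathcal{A}$. Hence $\nabla_\theta\sum_a\pi_\theta(a\mid\phi)=0$, and the baseline term contributes nothing to the gradient. Together these give $\nabla_\theta\mathcal{L}_{\text{head}}=-\nabla_\theta\mathbb{E}_{a\sim\pi_\theta}[s_a]$. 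The identity holds sample by sample, so it also holds after averaging over images and budgets.

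To connect with the remark in Sec.~\ref{sec:head}, I will also rewrite the gradient through $\nabla_\theta\pi_\theta=\pi_\theta\nabla_\theta\log\pi_\theta$. This gives $-\sum_a\pi_\theta(a\mid\phi)(s_a-\bar s)\nabla_\theta\log\pi_\theta(a\mid\phi)$. This is the REINFORCE estimator with baseline $\bar s$, evaluated exactly over all actions rather than from a single sampled action. It is therefore a zero-variance, full-information version of the policy gradient.

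The only step that needs care is checking that $\bar s$ and $\phi$ carry no dependence on $\theta$. For $\phi$, the \look hidden states and the scalar features come from the frozen pointer model when the head is fitted afterwards. Beyond that, the argument is one line.
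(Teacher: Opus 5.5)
Your proposal is correct and follows the paper's argument: the paper also drops the baseline term because normalization gives $\sum_a\nabla_\theta\pi_\theta(a\mid\phi)=0$. The only difference is that the paper cancels $\bar s$ after differentiating rather than expanding the loss first, and your explicit check that $s_a$, $\bar s$ (an unweighted mean over $\mathcal{A}$) and $\phi$ do not depend on $\theta$ spells out an assumption the paper leaves implicit.
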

\noindent\emph{Proof.}
Since $\sum_a\pi_\theta(a\mid\phi)=1$, $\sum_a\nabla_\theta\pi_\theta(a\mid\phi)=0$, so $\sum_a(s_a-\bar s)\nabla_\theta\pi_\theta(a\mid\phi)=\sum_a s_a\nabla_\theta\pi_\theta(a\mid\phi)=\nabla_\theta\mathbb{E}_{a\sim\pi_\theta}[s_a]$.

\noindent Because every view is executed, the head descends the exact gradient of the expected score, whereas a policy-gradient estimate from one sampled view per image has the same expectation and nonzero variance.

\section{Limitations}
\label{sec:supp_limits}
The \look token is trained for each model, whereas a training-free pointer can be read from the raw attention once its layers are calibrated for that model; the token replaces this calibration by training on the model's own answers, without box annotations.
Its gains concentrate where the information gap is large: at 392 tokens it is on par with the strongest baseline, and at 784 tokens all methods are close.
We study text-rich images, where tight budgets make the evidence illegible; zooming into several regions at once and extending the token to video frames are left for future work.

\clearpage
\section{Qualitative results}
\label{sec:supp_qual}
Fig.~\ref{fig:qual_success} shows further successes (Qwen3-VL-4B, 196 tokens).
Fig.~\ref{fig:qual_failure} groups typical failures.
The window can miss the evidence when the low-resolution view offers a plausible distractor, such as the heading instead of the factory name; a tight adaptive window can clip a word at its border; a correct window can still be read on the wrong line of a dense table; and on very tall infographics a window that contains the evidence is too large to be magnified.
\begin{figure*}[t]
  \centering
  \includegraphics[width=\textwidth]{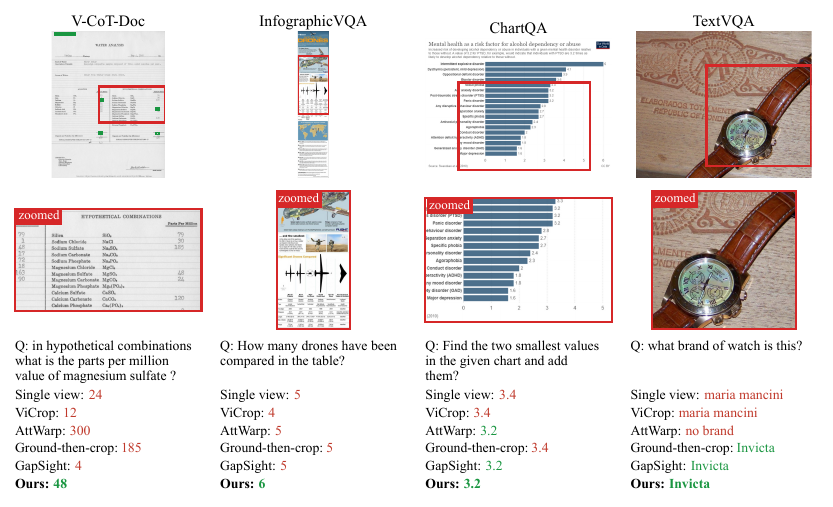}
  \caption{\textbf{Further successes} (Qwen3-VL-4B, 196 tokens). Top: input with the window selected by \ours (red) and, for V-CoT-Doc, the evidence boxes (green). Middle: the zoomed view as the model reads it. Bottom: answers (green: correct, red: wrong).}
  \label{fig:qual_success}
\end{figure*}
\begin{figure*}[t]
  \centering
  \includegraphics[width=\textwidth]{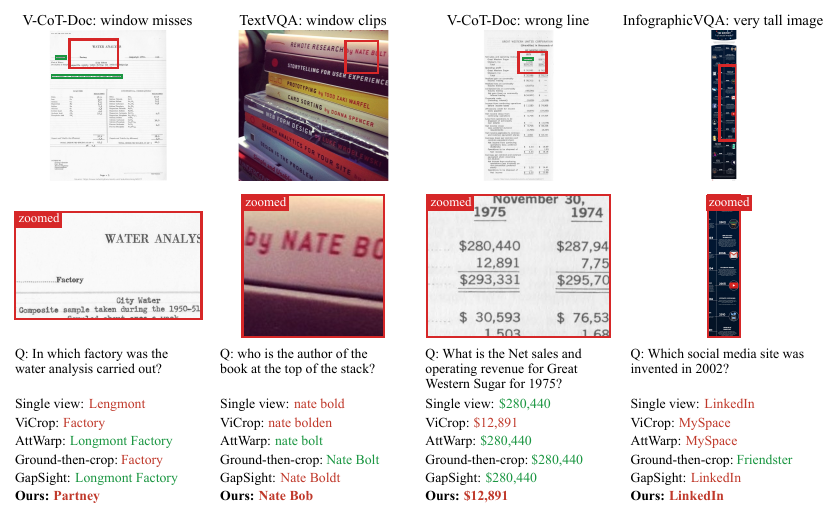}
  \caption{\textbf{Failure cases} (Qwen3-VL-4B, 196 tokens), with the same layout as Fig.~\ref{fig:qual_success}.}
  \label{fig:qual_failure}
\end{figure*}

\end{document}